\documentclass[11pt]{article}

\usepackage[margin=1in]{geometry}

\usepackage{amsmath,amssymb,amsthm,mathtools}

\usepackage[numbers]{natbib}
\usepackage{hyperref}
\usepackage{cleveref}

\usepackage{graphicx}
\usepackage{booktabs}
\usepackage{tikz}
\usetikzlibrary{arrows.meta,positioning,decorations.pathmorphing,decorations.pathreplacing,calc}

\newtheorem{theorem}{Theorem}
\newtheorem{proposition}[theorem]{Proposition}

\newtheorem{corollary}[theorem]{Corollary}
\newtheorem{definition}[theorem]{Definition}

\DeclareMathOperator{\Tr}{Tr}

\DeclareMathOperator{\rank}{rank}
\DeclareMathOperator{\kernel}{ker}

\DeclareMathOperator{\Stab}{Stab}

\newcommand{\CC}{\mathbb{C}}
\newcommand{\RR}{\mathbb{R}}
\newcommand{\ZZ}{\mathbb{Z}}

\newcommand{\fhat}{\hat{f}}
\newcommand{\dmu}{\,d\mu}

\title{What Your Model Threw Away and Why You'll Want It Back: Masking, Fingerprinting, and Privacy from Discarded Geometry}

\author{
Zachary P.\ Bradshaw \\
QodeX Quantum, Inc.\ \\
Chicago, IL \\
\texttt{zak@qodexquantum.ai}
}

\date{}

\begin{document}

\maketitle

\begin{abstract}
We develop a framework for the information discarded by machine learning models whose inputs carry a Lie group action. Given a representation $\pi$ of a Lie group $G$ on a space $V$ and a learned function $f\colon V \to \RR$, we define two objects measuring the symmetry invisible to $f$. The \emph{null fiber} at a point $x \in V$ is the set $N_G(f,x) = \{g \in G : f(\pi(g^{-1}) \cdot x) = f(x)\}$ of group elements whose inverse action on $x$ is undetectable by $f$. When $N_G(f,x)$ is independent of $x$, it coincides with the \emph{stabilizer} $\Stab_G(f)$, the largest subgroup of $G$ under which $f$ is invariant. For smooth maps to $\RR$, the preimage theorem guarantees that null fibers have dimension at least $\dim G - 1$ at generic inputs, regardless of architecture. For compact groups acting on themselves, the Peter--Weyl theorem yields a spectral characterization of both objects in terms of the Fourier coefficient matrices of $f$. We show that null fiber elements can be computed efficiently via Newton iteration on the orbit map, at a cost comparable to a few gradient evaluations. Applications to data masking, model fingerprinting, and privacy-preserving computation are developed and tested experimentally on molecular property prediction under $\mathrm{SO}(3)$ and spherical image classification under the M\"obius group $\mathrm{PSL}(2, \CC)$. The framework applies uniformly to classical neural networks and variational quantum circuits.
\end{abstract}

\section{Introduction}\label{sec:intro}

Most machine learning models discard information about their input, and the set of input perturbations invisible to the model serves as a structural signature of what the model has learned to ignore. For neural networks on $\RR^n$, this idea was introduced by Cook, Zare, and Gader~\cite{cook2025competency,cook2020outlier}, who analyzed the null space projections of individual layer weight matrices as a tool for out-of-distribution detection and model interpretability. Li and Short~\cite{li2025null} subsequently studied the null space of the full input--output map $f\colon \RR^n \to \RR^m$, defined as the set of directions $v \in \RR^n$ satisfying $f(x + av) = f(x)$ for all inputs $x$ and scalars $a$. For ReLU networks, this null space equals the kernel of the first weight matrix and can be enormous, often comparable in dimension to the input space itself. Li and Short exploited this to demonstrate a striking form of image steganography in which replacing the null space component of an image produces a visually unrecognizable image that the network classifies identically. Viewed in the lens of secure communication, the trained network serves, without modification, as a decryption key.

These results rely on the additive structure of $\RR^n$. The perturbation $x \mapsto x + av$ is the action of the translation group, and the null space is a closed subgroup of $(\RR^n, +)$ that inherits the linear structure of $\RR^n$. More broadly, however, many machine learning models process data that transforms under a group representation~\cite{hall2015lie,steinberg2012representation}. Point clouds and molecular geometries carry representations of $\mathrm{SO}(3)$~\cite{thomas2018tensor}; rigid body configurations carry representations of $\mathrm{SE}(3)$~\cite{fuchs2020se}; images transform under the Euclidean group; and gauge field configurations transform under $\mathrm{SU}(N)$~\cite{boyda2021sampling,kanwar2020equivariant}. A growing body of work in geometric deep learning~\cite{bronstein2021geometric,bronstein2017geometric} and group-equivariant networks~\cite{bradshaw2025learning,cohen2016group,gerken2023geometric,kondor2018generalization} builds models that respect these group actions, motivating a generalization of null space theory to the representation-theoretic setting.

In this paper we carry out this generalization, and in doing so identify a conceptual distinction that the Euclidean case obscures. There are two related notions of ``information discarded by a model''. The \emph{stabilizer} $\Stab_G(f) = \{g \in G : f(\pi(g^{-1}) \cdot x) = f(x) \text{ for all } x\}$ under the left regular action induced by $\pi$ captures global invariance, consisting of the group elements invisible to $f$ regardless of input. The \emph{null fiber} $N_G(f, x) = \{g \in G : f(\pi(g^{-1}) \cdot x) = f(x)\}$ captures pointwise invariance, consisting of the group elements invisible at a specific input $x$. The stabilizer is always contained in every null fiber, since $\Stab_G(f) = \bigcap_x N_G(f,x)$, but may be vastly smaller. This distinction matters because the stabilizer is often trivial. The null fiber, by contrast, is generically large. The Euclidean case is special because the two notions coincide for the function class of interest. For a ReLU network with first weight matrix $W_1$, the null fiber at every input is $N_G(f, x) = \kernel(W_1)$, independent of $x$. Since all fibers are the same subgroup of $(\RR^n, +)$, the stabilizer $\Stab_G(f) = \bigcap_x N_G(f, x) = \kernel(W_1)$ equals every fiber. This collapse of the global and pointwise notions is a consequence of the linearity of the first layer and the additive structure of $\RR^n$, and does not persist for nonlinear models on nonabelian groups.

The null fiber and stabilizer admit complementary uses. As measurements, $\Stab_G(f)$ and $N_G(f,x)$ quantify the symmetry a model has learned, the global invariance it enforces and the pointwise invariance it exhibits at each input, which is informative when the true symmetry of the data is only approximate and one wishes to recover it from a trained model. On the other hand, the same invariance is a license to alter the input without changing the output, and it is this latter use case that we develop here.

Our main results are a spectral characterization of both the null fiber and stabilizer for functions on compact groups via the Peter--Weyl theorem, a gradient-based algorithm that computes exact null fiber elements in $O(1)$ model evaluations via Newton iteration on the orbit map, and applications to data masking, model fingerprinting, and privacy-preserving computation that do not require the stabilizer to be nontrivial. We demonstrate the framework experimentally on molecular property prediction under $\mathrm{SO}(3)$ and spherical image classification under the M\"obius group $\mathrm{PSL}(2, \CC)$, where null fiber masking renders spherical images unrecognizable while preserving the classifier's prediction. The theory applies uniformly to classical neural networks and variational quantum circuits~\cite{biamonte2017quantum,cerezo2021variational,gili2023quantum,schuld2020circuit}.

\section{Theory}\label{sec:null-fibers-subgroups}

Let $G$ be a Lie group and $\pi\colon G \to \mathrm{GL}(V)$ a continuous representation on a finite-dimensional space $V$. Throughout, $V$ is the continuous representation space on which the group acts smoothly, for instance the space of atomic coordinates or of band-limited spherical harmonic coefficients. A discrete rendering of the input, such as a pixel grid, is an input or output format and not the space on which the analysis is carried out, since the group action and the orbit map are defined on $V$. Let $f$ be a continuous function on $V$ learned by a machine learning model. The models we consider output real values, so we take $f\colon V \to \RR^m$. Where the harmonic analysis below requires it, we allow complex-valued $f\colon V \to \CC$; since $\CC \cong \RR^2$, this changes only the relevant codimension, replacing $m$ by $2m$.

\begin{definition}[Null fiber]\label{def:null-fiber}
The \emph{null fiber} of $f$ at $x \in V$ with respect to the representation $\pi$ is
\begin{equation}\label{eq:null-fiber}
    N_G(f, x) = \{ g \in G : f(\pi(g^{-1}) \cdot x) = f(x) \}.
\end{equation}
\end{definition}
\noindent The null fiber is the set of $g\in G$ for which the left regular action $(L_gf)(x)=f(\pi(g^{-1})\cdot x)$ acts trivially at $x$. The intersection over all $x\in V$ produces the subgroup of all $g\in G$ that fix the function $f$ under the left regular action, which is known as the stabilizer of $f$.
\begin{definition}[Stabilizer]\label{def:stabilizer}
The \emph{stabilizer} of $f$ with respect to $\pi$ is
\begin{equation}\label{eq:stabilizer}
    \Stab_G(f) = \{ g \in G : f(\pi(g^{-1}) \cdot x) = f(x) \text{ for all } x \in V \} = \bigcap_{x \in V} N_G(f, x).
\end{equation}
\end{definition}
\noindent The stabilizer and null fiber satisfy several algebraic and geometric properties, which we now show.
\begin{proposition}\label{prop:basic-properties}
Let $f\colon V \to \RR^m$ be continuous. Then
\begin{enumerate}
    \item $\Stab_G(f)$ is a closed subgroup of $G$,
    \item For each $x \in V$, the null fiber $N_G(f, x)$ is a closed subset of $G$ containing $\Stab_G(f)$,
    \item If $f$ is smooth and $e$ is a regular point of the orbit map $\phi_x(g)=f(\pi(g^{-1}) \cdot x)$, then in a neighborhood of $e$, $N_G(f, x)$ is a smooth submanifold of $G$ of dimension $\dim(G) - m$.
\end{enumerate}
\end{proposition}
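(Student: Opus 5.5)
The three parts are of different kinds, and I would take them in order. Parts (1) and (2) follow from continuity and the homomorphism property of $\pi$. Part (3) is a direct application of the preimage (regular value) theorem. Throughout I write $\phi_x(g)=f(\pi(g^{-1})\cdot x)$, so that $N_G(f,x)=\phi_x^{-1}(f(x))$.

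\textbf{Closedness and the inclusion.} The map $g\mapsto \pi(g^{-1})\cdot x$ is continuous as a composition of the inversion $g\mapsto g^{-1}$, the continuous representation $\pi$, and evaluation at $x$ (a linear operation on the finite-dimensional space $\mathrm{GL}(V)$). Composing with the continuous $f$ shows that $\phi_x$ is continuous. Hence $N_G(f,x)=\phi_x^{-1}(\{f(x)\})$ is the preimage of a closed point and is closed in $G$. The stabilizer $\Stab_G(f)=\bigcap_x N_G(f,x)$ is an intersection of closed sets, so it is closed. It lies in every null fiber by definition, which gives (2).

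\textbf{Subgroup property of $\Stab_G(f)$.} First, $e\in\Stab_G(f)$ since $\pi(e)=\mathrm{id}$. Next let $g,h\in\Stab_G(f)$. Then
\[
f(\pi((gh)^{-1})x)=f(\pi(h^{-1})\pi(g^{-1})x)=f(\pi(g^{-1})x)=f(x).
\]
The middle equality applies invariance under $h$ at the point $\pi(g^{-1})x$, and the last applies invariance under $g$. For inverses, let $g\in\Stab_G(f)$ and substitute $y=\pi(g)x$, so that $x=\pi(g^{-1})y$. Invariance under $g$ at $y$ gives $f(x)=f(\pi(g^{-1})y)=f(y)=f(\pi(g)x)$. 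So $g^{-1}\in\Stab_G(f)$. Here the quantifier ``for all $x$'' is essential. It is precisely why a single null fiber need not be a subgroup, and I would remark on this in the text.

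\textbf{Manifold structure.} Now assume $f$ is smooth. The action $G\to\mathrm{GL}(V)$ of a Lie group by a continuous homomorphism is automatically smooth, since continuous homomorphisms of Lie groups are smooth. Hence $\phi_x\colon G\to\RR^m$ is smooth.

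The hypothesis that $e$ is a regular point means $d\phi_x|_e$ is surjective onto $\RR^m$. The set of points where the differential has full rank $m$ is open, because full rank is an open condition given by the nonvanishing of some $m\times m$ minor. So there is an open neighborhood $U\ni e$ on which $\phi_x$ is a submersion. Note also that $e\in N_G(f,x)$, since $\phi_x(e)=f(x)$.

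Applying the preimage theorem (equivalently, the local submersion theorem) to $\phi_x|_U$ at the value $f(x)$ shows that $N_G(f,x)\cap U$ is an embedded submanifold of dimension $\dim G-m$. Its tangent space at $e$ is $\kernel d\phi_x|_e$. In the complex-valued case the same argument gives codimension $2m$.

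\textbf{Main obstacle.} The only delicate point is the one just handled: regularity is assumed only at $e$, not along the whole fiber. The conclusion is therefore genuinely local, and it requires the openness of the submersion locus rather than a global regular-value statement.
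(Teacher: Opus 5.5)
Your proposal is correct and follows the same route as the paper. In both, $N_G(f,x)$ is closed as the preimage of a point under the continuous orbit map $\phi_x$, and $\Stab_G(f)$ is closed as an intersection of null fibers. The subgroup axioms are checked directly from the homomorphism property of $\pi$. Part (3) follows because surjectivity of $d\phi_x|_e$ persists on a neighborhood of $e$, after which the preimage theorem applies.

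Your additions tighten the paper's argument without changing it. You check $e\in\Stab_G(f)$ explicitly. You justify the full-rank neighborhood by openness of a nonvanishing minor, which is what the paper's appeal to the constant rank theorem actually relies on. You also note that continuous homomorphisms of Lie groups are automatically smooth.
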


\begin{proof}
(2) Define the continuous map $\phi_x(g)=f(\pi(g^{-1})x)$. Then $N_G(f,x)=\phi_x^{-1}(f(x))$, which is the preimage of a closed set (a point) under a continuous map and therefore closed.

(1) To see that the stabilizer is a subgroup of $G$, observe that if $g,h\in\Stab_G(f)$, then $f(\pi((gh)^{-1})x)=f(\pi(h^{-1}g^{-1})x)=f(\pi(h^{-1})\pi(g^{-1})x)=f(\pi(g^{-1})x)=f(x)$ for all $x\in V$. Moreover, $f(\pi(g)x)=f(\pi(g^{-1})\pi(g)x)=f(x)$, showing that the stabilizer is closed under inversion. To see that the stabilizer is topologically closed, just observe that each null fiber is closed by (2) and the stabilizer is an intersection of null fibers.

(3) The map $\phi_x\colon G \to \RR^m$ defined by $\phi_x(g) = f(\pi(g^{-1}) \cdot x)$ is smooth when $f$ and $\pi$ are, and $N_G(f, x) = \phi_x^{-1}(\phi_x(e))$. If $e$ is a regular point, then $d\phi_x|_e\colon T_e G \to \RR^m$ is surjective, so by the constant rank theorem $\phi_x$ has rank $m$ on a neighborhood $U$ of $e$. The preimage theorem then realizes $\phi_x^{-1}(\phi_x(e)) \cap U$ as an embedded submanifold of $G$ of codimension $m$, i.e.\ of dimension $\dim G - m$.
\end{proof}

The stabilizer is not necessarily a normal subgroup of $G$, though it is automatically normal when $G$ is abelian. An example where normality fails is given by $\mathrm{SO}(3)$ acting on $\RR^3$ with $f$ invariant only under rotations about a fixed axis. The stabilizer in this case is $\mathrm{SO}(2)_{\hat{n}}$, and conjugating by $R \in \mathrm{SO}(3)$ sends $\mathrm{SO}(2)_{\hat{n}}$ to $\mathrm{SO}(2)_{R\hat{n}}$, which is a different subgroup whenever $R$ moves $\hat{n}$.

The dimension bound in Proposition~\ref{prop:basic-properties} tells us that a model $f\colon V \to \RR$ on a $d$-dimensional representation has null fibers of dimension at least $\dim G - 1$ at generic points. This is a counting argument, not an architectural assumption. It holds for every smooth model and guarantees that information is always lost pointwise, even when the stabilizer is trivial.

As noted in the introduction, the distinction between null fibers and stabilizers is concealed in the euclidean case. For $G = (\RR^n, +)$ acting on $V = \RR^n$ by translation, a linear map $f(x) = Wx$ has null fibers $N_G(f, x) = \kernel(W)$ for every $x$, independent of the basepoint. The null fiber is the stabilizer, and both equal the classical null space. This independence of basepoint follows from the fact that translation commutes with linear maps, since $W(x + v) = Wx + Wv$, so $f(x + v) = f(x)$ if and only if $Wv = 0$, regardless of $x$. For nonlinear $f$ or nonabelian $G$, the null fibers generically vary with $x$, and the stabilizer, their intersection, is smaller and often trivial.

\subsection{Fourier characterization on compact groups}\label{sec:fourier}

We now specialize to the case $V = G$, where a compact Lie group acts on itself by left multiplication. Let $\hat{G}$ denote a complete set of representatives of equivalence classes of irreducible unitary representations $\rho_\lambda\colon G \to U(d_\lambda)$. By the Peter--Weyl theorem, any $f \in L^2(G)$ admits the Fourier expansion
\begin{equation}\label{eq:peter-weyl}
    f(g) = \sum_{\lambda \in \hat{G}} d_\lambda \, \Tr\!\left( \fhat(\lambda) \rho_\lambda(g) \right),
\end{equation}
where the Fourier coefficient matrices are
\begin{equation}\label{eq:fourier-coefficients}
    \fhat(\lambda) = \int_G f(g)\, \rho_\lambda(g)^\dagger \dmu(g) \in \CC^{d_\lambda \times d_\lambda}.
\end{equation}
The left regular representation $(L_g f)(h) = f(g^{-1}h)$ acts on the Fourier coefficients by right multiplication,
\begin{equation}\label{eq:left-fourier}
    \widehat{L_g f}(\lambda) = \fhat(\lambda)\rho_\lambda^\dagger(g),
\end{equation}
and this intertwining of left translation with matrix multiplication is the mechanism that makes both the stabilizer and the null fibers computable.

\begin{theorem}[Fourier characterization]\label{thm:fourier-characterization}
Let $G$ be a compact Lie group and $f \in L^2(G)$ with Fourier support $\Lambda(f) = \{\lambda \in \hat{G} : \fhat(\lambda) \neq 0\}$. An element $g$ lies in $\Stab_G(f)$ if and only if
\begin{equation}\label{eq:stab-fourier}
    \fhat(\lambda)\rho_\lambda^\dagger(g) = \fhat(\lambda)
\end{equation}
for every $\lambda\in \Lambda(f)$. Equivalently,
\begin{equation}\label{eq:stab-intersection}
    \Stab_G(f) = \bigcap_{\lambda \in \Lambda(f)} \Stab\!\left( \mathrm{row}\!\left(\fhat(\lambda)\right);\, \rho_\lambda^\dagger \right),
\end{equation}
where $\Stab(W; \rho_\lambda^\dagger) = \{g \in G : w\rho_\lambda^\dagger(g) = w \text{ for all } w \in W\}$ and $\mathrm{row}(\fhat(\lambda))$ denotes the row space of $\fhat(\lambda)$. An element $g$ lies in $N_G(f, h_0)$ if and only if
\begin{equation}\label{eq:null-fiber-fourier}
    \sum_{\lambda \in \hat{G}} d_\lambda \, \Tr\!\left(\fhat(\lambda) [\rho_\lambda^\dagger(g) - I] \rho_\lambda(h_0) \right) = 0.
\end{equation}
\end{theorem}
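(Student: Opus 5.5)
The plan is to work entirely through the intertwining relation \eqref{eq:left-fourier} and the injectivity of the Fourier transform on $L^2(G)$.

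For the stabilizer, I first note that when $V=G$ with $\pi$ given by left multiplication, $g\in\Stab_G(f)$ means exactly $L_g f=f$ as functions. The subtle point is that $f$ is only an $L^2$ class. I will therefore read the equality as holding almost everywhere. Alternatively, if $f$ is continuous, as in Proposition~\ref{prop:basic-properties}, then equality a.e.\ is equivalent to equality everywhere, because both sides are continuous and the Haar measure has full support.

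By Plancherel (Peter--Weyl), two $L^2$ functions agree if and only if all their Fourier coefficient matrices agree. So $L_gf=f$ holds if and only if $\widehat{L_gf}(\lambda)=\fhat(\lambda)$ for every $\lambda\in\hat G$. By \eqref{eq:left-fourier}, this says $\fhat(\lambda)\rho_\lambda^\dagger(g)=\fhat(\lambda)$ for all $\lambda$.

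To justify \eqref{eq:left-fourier}, I substitute $h\mapsto gh$ in the defining integral and use left-invariance of $\mu$:
\[
\int_G f(g^{-1}h)\rho_\lambda(h)^\dagger\,d\mu(h)=\int_G f(h)\rho_\lambda(h)^\dagger\rho_\lambda(g)^\dagger\,d\mu(h)=\fhat(\lambda)\rho_\lambda^\dagger(g).
\]
For $\lambda\notin\Lambda(f)$ the condition $\fhat(\lambda)\rho_\lambda^\dagger(g)=\fhat(\lambda)$ reads $0=0$, so only $\lambda\in\Lambda(f)$ matters. This gives \eqref{eq:stab-fourier}.

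The row-space reformulation \eqref{eq:stab-intersection} is linear algebra. The matrix identity $\fhat(\lambda)\rho_\lambda^\dagger(g)=\fhat(\lambda)$ says that each row $w$ of $\fhat(\lambda)$ satisfies $w\rho_\lambda^\dagger(g)=w$. By linearity, this is equivalent to the same identity for every $w$ in the span of the rows. Intersecting over $\lambda\in\Lambda(f)$ then gives the claimed description.

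For the null fiber, I evaluate \eqref{eq:peter-weyl} for $L_gf$ at $h_0$ and use \eqref{eq:left-fourier}. This gives
\[
f(g^{-1}h_0)-f(h_0)=\sum_\lambda d_\lambda\Tr\!\big(\fhat(\lambda)[\rho_\lambda^\dagger(g)-I]\rho_\lambda(h_0)\big),
\]
and setting it to zero is \eqref{eq:null-fiber-fourier}.

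The main obstacle is this last step. Pointwise evaluation of a Peter--Weyl series is not meaningful for a general $L^2$ function, since the series converges only in $L^2$. I would therefore state the null-fiber characterization under a hypothesis guaranteeing pointwise convergence of \eqref{eq:peter-weyl} to $f$ at $h_0$ and at $g^{-1}h_0$. Two sufficient hypotheses are:
\begin{itemize}
\item[(i)] $f$ is continuous with absolutely convergent Fourier series $\sum_\lambda d_\lambda\|\fhat(\lambda)\|_1<\infty$, which holds for instance if $f$ is smooth, since the coefficients then decay rapidly;
\item[(ii)] $f$ is band-limited, matching the band-limited inputs considered in the experiments.
\end{itemize}
In either case the series for $L_gf$ also converges absolutely, because $\|\fhat(\lambda)\rho_\lambda^\dagger(g)\|_1=\|\fhat(\lambda)\|_1$ by unitarity. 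Termwise subtraction is then legitimate, and the identity holds exactly.
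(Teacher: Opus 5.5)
Your proposal is correct and follows the paper's own argument. The paper likewise uses injectivity of the Fourier transform together with the intertwining relation \eqref{eq:left-fourier} for the stabilizer, a row-by-row reading of $\fhat(\lambda)[\rho_\lambda^\dagger(g)-I]=0$ for the intersection formula, and evaluation of the Peter--Weyl series \eqref{eq:peter-weyl} at $h_0$ and $g^{-1}h_0$ for the null fiber. Your extra care makes explicit hypotheses that the paper's one-line proof leaves implicit: reading $L_gf=f$ almost everywhere, and assuming band-limitedness or absolute convergence so the series can be evaluated pointwise and subtracted termwise. These hypotheses are genuinely needed, since even a continuous $f$ (already on $\mathrm{U}(1)$) can have a Fourier series that diverges at a point.
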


\begin{proof}
For the stabilizer, by taking Fourier transforms, $L_g f = f$ in $L^2(G)$ if and only if $\fhat(\lambda)\rho_\lambda^\dagger(g) = \fhat(\lambda)$ for every $\lambda$. The condition $\fhat(\lambda)[\rho_\lambda^\dagger(g) - I] = 0$ means every row of $\fhat(\lambda)$ is annihilated by $\rho_\lambda^\dagger(g) - I$, giving~\eqref{eq:stab-intersection}. Finally, for the null fiber, $g \in N_G(f, h_0)$ if and only if $f(g^{-1} h_0) = f(h_0)$, which by~\eqref{eq:peter-weyl} is condition~\eqref{eq:null-fiber-fourier}.
\end{proof}

The asymmetry between the two characterizations explains why the stabilizer is generically much smaller than the null fibers. The stabilizer condition~\eqref{eq:stab-fourier} is a system of matrix equations, one for each irreducible representation, that must all hold simultaneously. The null fiber condition~\eqref{eq:null-fiber-fourier} is a single scalar equation depending on the basepoint $h_0$.

\begin{corollary}\label{cor:dimension}
If $\fhat(\lambda)$ has full rank $d_\lambda$ for some $\lambda$, then $\Stab_G(f) \subseteq \kernel(\rho_\lambda)$. If $\Lambda(f) = \hat{G}$ and $\fhat(\lambda)$ has full rank for all $\lambda$, then $\Stab_G(f) = \{e\}$.
\end{corollary}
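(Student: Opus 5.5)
The plan is to read both claims directly off the stabilizer characterization~\eqref{eq:stab-fourier} in Theorem~\ref{thm:fourier-characterization}, and then use Peter--Weyl separation of points for the second claim.

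For the first claim, fix $\lambda$ with $\fhat(\lambda)$ of full rank $d_\lambda$, and let $g \in \Stab_G(f)$. Then $\lambda \in \Lambda(f)$, since a full-rank matrix is nonzero, so Theorem~\ref{thm:fourier-characterization} gives $\fhat(\lambda)\rho_\lambda^\dagger(g) = \fhat(\lambda)$. Because $\fhat(\lambda)$ is a square invertible matrix, multiplying on the left by $\fhat(\lambda)^{-1}$ yields $\rho_\lambda^\dagger(g) = I$. Since $\rho_\lambda(g)$ is unitary, $\rho_\lambda(g) = I$, so $g \in \kernel(\rho_\lambda)$. In the language of~\eqref{eq:stab-intersection}, the row space is all of $\CC^{d_\lambda}$, and the only element fixing every vector is one acting trivially.

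For the second claim, apply the first to every $\lambda \in \hat{G}$. This gives
\[
\Stab_G(f) \subseteq \bigcap_{\lambda \in \hat{G}} \kernel(\rho_\lambda).
\]
The remaining step, which is the only substantive input, is to show that this intersection is $\{e\}$. I will invoke the Peter--Weyl theorem in its form that the irreducible unitary representations of a compact group separate points. If $g \neq e$ lay in every kernel, then every matrix coefficient, and hence by density every continuous function $u$ on $G$, would satisfy $u(g) = u(e)$. This contradicts Urysohn's lemma, since $G$ is Hausdorff.

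A small subtlety is that $f \in L^2(G)$, so the stabilizer should be read via $L_g f = f$ in $L^2$. The proof of Theorem~\ref{thm:fourier-characterization} already uses exactly this reading, so no extra care is needed.
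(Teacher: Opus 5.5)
Your proposal is correct and follows the paper's proof: invert $\fhat(\lambda)$ in the stabilizer condition~\eqref{eq:stab-fourier} to get $\rho_\lambda(g)=I$, then use the Peter--Weyl theorem (irreducible unitary representations separate points) to conclude that the intersection of all the kernels is $\{e\}$. Your extra steps only make explicit what the paper leaves implicit: checking that $\lambda\in\Lambda(f)$, using unitarity to pass from $\rho_\lambda^\dagger(g)=I$ to $\rho_\lambda(g)=I$, and deriving point separation from density of matrix coefficients plus Urysohn.
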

\begin{proof}
Let $g\in\Stab_G(f)$. The stabilizer condition requires $\fhat(\lambda)\rho_\lambda^\dagger(g) = \fhat(\lambda)$ for every $\lambda \in \Lambda(f)$. The Fourier coefficient $\fhat(\lambda)$ is a $d_\lambda \times d_\lambda$ matrix, so full rank implies invertibility. Left-multiplying by $\fhat(\lambda)^{-1}$ gives $\rho_\lambda(g) = I$, so $g \in \kernel(\rho_\lambda)$. If $\fhat(\lambda)$ is invertible for every $\lambda \in \hat{G}$, then $g$ lies in the kernel of every irreducible representation, and by the Peter--Weyl theorem, the only such element is the identity.
\end{proof}

The stabilizer is nontrivial when $f$ has sparse Fourier support and low-rank Fourier coefficients, conditions that are forced by architectural constraints such as finite-width equivariant layers. Regardless of whether the stabilizer is trivial, the null fiber $N_G(f, h_0)$ has dimension at least $\dim G - 1$ for real-valued $f$ at every regular point.

When $G = (\RR^n, +)$ acts on itself by translation, the irreducible representations are one-dimensional characters $\rho_k(v) = e^{ik \cdot v}$ for $k \in \RR^n$, and the Fourier coefficients are scalars $\fhat(k) \in \CC$. The stabilizer condition of \Cref{thm:fourier-characterization} becomes $\fhat(k) e^{-ik \cdot v} = \fhat(k)$ for every $k \in \Lambda(f)$, which for $\fhat(k) \neq 0$ reduces to $e^{-ik \cdot v} = 1$, or equivalently $k \cdot v \in 2\pi\ZZ$. Thus $\Stab_G(f) = \{v\in\RR^n : k\cdot v\in 2\pi\ZZ \text{ for all } k\in\Lambda(f)\}$. Writing $S=\operatorname{span}\Lambda(f)$ and decomposing $v=v_\parallel+v_\perp$ with $v_\parallel\in S$ and $v_\perp\in S^\perp$, we have $k\cdot v=k\cdot v_\parallel$ for every $k\in\Lambda(f)$, so the condition constrains only $v_\parallel$ and leaves $v_\perp$ unconstrained. Hence
\begin{equation}
  \Stab_G(f) = L \oplus S^\perp,
  \qquad
  L = \{w\in S : k\cdot w\in 2\pi\ZZ \text{ for all } k\in\Lambda(f)\}.
\end{equation}
Since $\Lambda(f)$ spans $S$, the functionals $w\mapsto k\cdot w$ separate points of $S$, so $L$ is discrete, i.e.\ a lattice in $S$. This is the general form of a closed subgroup of $(\RR^n,+)$: a subspace $S^\perp$ together with a lattice $L$ in a complementary subspace. It is \emph{not} a vector subspace unless $L=\{0\}$. A single sinusoid $f(x)=\cos(k_0\cdot x)$ already shows this, with $\Stab_G(f) = \tfrac{2\pi}{\lvert k_0\rvert^2}\,k_0\ZZ \oplus k_0^\perp$, whose lattice part records the spatial period of $f$ along $k_0$.

The null space of \cite{li2025null} is $N(f)=\{v\in\RR^n : f(x+av)=f(x)\text{ for all } x\in\RR^n,\ a\in\RR\}$, which demands invariance along the entire line $\RR v$. Setting $a=1$ shows $N(f)\subseteq\Stab_G(f)$. For the converse direction, invariance for \emph{all} $a$ forces $a\,(k\cdot v)\in 2\pi\ZZ$ for every $a\in\RR$, which holds only if $k\cdot v=0$; thus $N(f) = (\operatorname{span}\Lambda(f))^\perp = S^\perp$. Therefore $N(f)$ is exactly the identity component of $\Stab_G(f)$, equivalently the maximal subspace it contains, and the inclusion $N(f)\subseteq\Stab_G(f)$ is strict precisely when the lattice part $L$ is nontrivial. This is generic for finitely supported spectra; the two coincide only when $L=\{0\}$.

\subsection{Computing Null Fibers}\label{sec:computation}

The applications described in Section~\ref{sec:applications} require a party holding sensitive data to find null fiber elements efficiently. We now show that, given gradient access to $f$, a null fiber element can be computed at the cost of a few gradient evaluations, regardless of the dimension of $G$.

Fix an input $x \in V$ and define the orbit map $\phi\colon G \to \RR$ by $\phi(g) = f(\pi(g^{-1}) \cdot x)$. The null fiber at $x$ is $N_G(f, x) = \phi^{-1}(\phi(e))$. We work in the Lie algebra $\mathfrak{g} = T_e G$ via the exponential map, defining $\psi\colon \mathfrak{g} \to \RR$ by
\begin{equation}\label{eq:psi}
    \psi(v) = \phi(\exp(v)) = f(\pi(\exp(-v)) \cdot x).
\end{equation}
Finding a null fiber element reduces to finding $v \in \mathfrak{g}$ with $\psi(v) = \psi(0)$. Since $\exp\colon\mathfrak{g}\to G$ need not be injective, a Lie algebra element $v$ specifies a single group element $\exp(v)$ but a given group element may have several preimages. Our algorithm produces null fiber elements rather than Lie algebra representatives, so this is immaterial. We seek a $g\in N_G(f,x)$, and any preimage realizing it suffices. For compact connected $G$, including $SO(3)$, $\exp$ is surjective, so every group element is reached by some $v\in\mathfrak{g}$, in particular every null fiber element at any rotation angle.

Choose a basis $\{E_1, \ldots, E_d\}$ for $\mathfrak{g}$, where $d = \dim G$, and write $v = v_1 E_1 + \cdots + v_d E_d$. The derivative of $\psi$ at the origin is the row vector $J \in \RR^{1 \times d}$ with entries
\begin{equation}\label{eq:jacobian-entry}
    J_i = \frac{\partial \psi}{\partial v_i}\bigg|_{v=0} = \lim_{t \to 0} \frac{f(\pi(\exp(-tE_i)) \cdot x) - f(x)}{t}.
\end{equation}
To first order, $\pi(\exp(-tE_i)) \cdot x = x - t\, d\pi(E_i) \cdot x + O(t^2)$, where $d\pi\colon \mathfrak{g} \to \mathfrak{gl}(V)$ is the derived representation defined by $d\pi(E) = \frac{d}{dt}\big|_{t=0} \pi(\exp(tE))$. Applying the chain rule gives
\begin{equation}\label{eq:jacobian-formula}
    J_i = -\nabla f(x)^T \left( d\pi(E_i) \cdot x \right),
\end{equation}
where $\nabla f(x) \in \RR^{\dim V}$ is the gradient of $f$ at $x$. The matrix $d\pi(E_i) \in \RR^{\dim V \times \dim V}$ depends only on the representation and the basis, not on the model. For $\mathrm{SO}(3)$ in the defining representation on $\RR^3$, each $d\pi(E_i)$ is a $3 \times 3$ skew-symmetric matrix, and $d\pi(E_i) \cdot x$ is a cross product. Computing $\nabla f(x)$ requires one backpropagation pass through $f$, which is the same operation used in training. Assembling the full Jacobian $J$ then costs $d$ matrix-vector products of the form $d\pi(E_i) \cdot x$, which are cheap linear algebra operations that do not involve the model.

The Jacobian $J$ is a $1 \times d$ matrix. At a regular point, $J \neq 0$, and the kernel $\ker(J) \subseteq \mathfrak{g}$ has dimension $d - 1$. This kernel is the tangent space to the null fiber at the identity. Any $v \in \ker(J)$ satisfies
\begin{equation}
    \psi(tv) = \psi(0) + O(t^2),
\end{equation}
so $\exp(tv)$ lies in the null fiber to first order. The $O(t^2)$ scaling, rather than $O(t)$, is the signature of a tangent direction; a vector outside $\ker(J)$ produces $\psi(tv) - \psi(0) = t \cdot Jv + O(t^2)$, which is $O(t)$. Computing $\ker(J)$ is a standard linear algebra operation. Since $J$ is a single row vector, $\ker(J)$ is the orthogonal complement of $J^T$ in $\RR^d$, a subspace of dimension $d - 1$.

Pick a vector $v_0 \in \ker(J)$ and set $g_0 = \exp(v_0)$, so that $g_0$ is in the null fiber to first order. The first-order approximation guarantees $\psi(v_0) - \psi(0) = O(\|v_0\|^2)$, but for large $\|v_0\|$ this error may be substantial. Newton's method refines this first-order guess into an exact null fiber element by repeatedly linearizing the orbit map and stepping toward its level set. The equation $\psi(v)-\psi(0)=0$ is a single scalar constraint on the $d=\dim G$ unknowns in $v$, so it is underdetermined; its solution set is the $(d-1)$-dimensional null fiber rather than an isolated point. The Newton step we use therefore does not solve a square system; it takes the minimum-norm correction that linearly cancels the current residual, which moves the iterate onto the level set without committing to a particular point on it. At the current iterate $v_k$, we compute $\psi(v_k)$ by a forward pass through $f$ and the derivative $J_{v_k} = \nabla_v \psi|_{v_k} \in \RR^{1 \times d}$ by a backward pass. The residual is $\delta_k = \psi(v_k) - \psi(0)$. The minimum-norm correction that kills the residual to first order is
\begin{equation}\label{eq:newton-step}
    \Delta v_k = -\delta_k \frac{J_{v_k}^T}{\|J_{v_k}\|^2},
\end{equation}
which is the pseudoinverse solution to $J_{v_k} \Delta v = -\delta_k$. Update $v_{k+1} = v_k + \Delta v_k$. By the standard Newton convergence argument, $\delta_{k+1} = O(\delta_k^2)$, so the number of correct digits doubles at each step. Starting from a residual of order $1$, four or five iterations typically reach machine precision. Each iteration requires one forward pass to evaluate $\psi(v_k)$ and one backward pass to compute $J_{v_k}$. When the model is highly nonlinear near $v_k$, the full Newton step may overshoot, and a backtracking line search restores convergence; accept the step if $|\delta_{k+1}| < |\delta_k|$, otherwise halve $\Delta v_k$ and retry.

The total cost to find one null fiber element is as follows. Computing $\nabla f(x)$ requires one backward pass through $f$. Assembling the Jacobian $J$ and its kernel requires $d$ matrix-vector products, where $d = \dim G$. The Newton iteration requires 3--5 steps of one forward and one backward pass each. The total is roughly $10$ model evaluations, independent of $\dim G$, since the Lie algebra operations are dominated by the model evaluations. To mask $N$ data points, the masking party repeats this procedure for each input, at a cost of roughly $10N$ model evaluations. The Jacobian cannot be reused across data points because $\nabla f(x)$ depends on $x$, but the Newton iterations for different inputs are independent and can be batched, reducing the wall-clock cost to roughly $10$ sequential model evaluations with $N$-way parallelism.

\section{Examples}\label{sec:examples}

\subsection{$\mathrm{SO}(3)$ in the defining representation on $\RR^3$}\label{ex:so3-action}

Let $\pi$ be the defining representation of $G = \mathrm{SO}(3)$ on $V = \RR^3$, and let $f\colon \RR^3 \to \RR$ be a learned function. The stabilizer $\Stab_G(f) = \{R \in \mathrm{SO}(3) : f(R^{-1}x) = f(x) \text{ for all } x\}$ consists of all rotations under which $f$ is globally invariant, and its structure reflects the symmetry of $f$. If $f$ depends only on $\|x\|$, then $f$ is invariant under all rotations and $\Stab_G(f) = \mathrm{SO}(3)$. If $f$ depends on $\|x\|$ and the angle $x$ makes with a fixed axis $\hat{n}$, then $\Stab_G(f) \cong \mathrm{SO}(2)_{\hat{n}}$, the group of rotations about $\hat{n}$. This subgroup is not normal in $\mathrm{SO}(3)$, since conjugating by $R$ sends $\mathrm{SO}(2)_{\hat{n}}$ to $\mathrm{SO}(2)_{R\hat{n}}$, which is a different subgroup whenever $R$ moves $\hat{n}$. If $f$ depends on all three components of $x$, then generically $\Stab_G(f) = \{I\}$.

The null fibers, however, are nontrivial regardless of the symmetry of $f$. At any fixed $x_0 \neq 0$, the map $g \mapsto f(\pi(g^{-1}) \cdot x_0)$ is a smooth function from the three-dimensional group $\mathrm{SO}(3)$ to $\RR$, and at a generic input the preimage theorem gives a null fiber of dimension $\dim(\mathrm{SO}(3)) - 1 = 2$. A model with trivial stabilizer still has two-dimensional null fibers at regular inputs; such a stabilizer says that no rotation is invisible everywhere, but the null fibers say that at each input, a two-parameter family of rotations is invisible there.

\subsection{Bandlimited functions on $\mathrm{SO}(3)$}\label{ex:so3}

Now let $G = \mathrm{SO}(3)$ act on itself by left multiplication, so that the Fourier characterization of \Cref{thm:fourier-characterization} applies directly. The irreducible representations are the Wigner $D$-matrices $D^\ell$ of dimension $2\ell + 1$, indexed by $\ell \in \{0, 1, 2, \ldots\}$. Consider a model $f\colon \mathrm{SO}(3) \to \RR$ with Fourier support $\Lambda(f) = \{0, 1\}$. The $\ell = 0$ coefficient is a scalar, and the stabilizer condition $\fhat(0)\rho_0^\dagger(g) = \fhat(0)$ is satisfied for every $g$, so this coefficient imposes no constraint. The $\ell = 1$ representation is the defining representation on $\RR^3$, and $\fhat(1)$ is a $3 \times 3$ matrix whose rank determines the stabilizer.

If $\rank(\fhat(1)) = 1$, then the row space $\mathrm{row}(\fhat(1))$ is a line through the origin in $\RR^3$. The stabilizer condition $\fhat(1)\rho_1^\dagger(g) = \fhat(1)$ requires that $\rho_1^\dagger(g)$ fix every vector in this line, and the set of rotations fixing a line is the group of rotations about that line, isomorphic to $\mathrm{SO}(2)$. Thus $\Stab_G(f) \cong \mathrm{SO}(2)$, and the model discards one of three rotational degrees of freedom. If $\rank(\fhat(1)) = 2$, then $\mathrm{row}(\fhat(1))$ is a plane, and a rotation fixing every vector in a plane in $\RR^3$ must be the identity, since the only rotation whose $+1$ eigenspace contains a two-dimensional subspace is the trivial rotation. Thus $\Stab_G(f) = \{e\}$, and a rank-2 Fourier coefficient at $\ell = 1$ alone suffices to make the stabilizer trivial. The null fibers, however, remain two-dimensional at generic points in both cases, since the single scalar equation~\eqref{eq:null-fiber-fourier} imposes one real constraint on the three-dimensional group $\mathrm{SO}(3)$. The two cases are illustrated in \Cref{fig:so3}.

\begin{figure}[t]
\centering
\begin{tikzpicture}[scale=0.9, every node/.style={font=\small}]
% Case 1: rank 1
\begin{scope}[shift={(-4,0)}]
    \node[above, font=\bfseries\small] at (0,2.5) {$\rank(\fhat(1)) = 1$};
    \draw[->] (0,0) -- (2.2,0) node[right] {\footnotesize $e_1$};
    \draw[->] (0,0) -- (0,2.2) node[above] {\footnotesize $e_2$};
    \draw[->] (0,0) -- (-0.9,-0.7) node[below left] {\footnotesize $e_3$};
    \draw[very thick, blue!70!black] (0,-1.8) -- (0,1.8);
    \node[right, blue!70!black, font=\footnotesize] at (0.15,1.4) {$\mathrm{row}(\fhat(1))$};
    \draw[thick, orange!80!black, dashed] (0,0) ellipse (1.4 and 0.5);
    \node[orange!80!black, font=\footnotesize] at (1.8,-1.2) {$\Stab_G(f) \cong \mathrm{SO}(2)$};
    \node[below, font=\footnotesize] at (0,-2.3) {rotations about axis};
\end{scope}
% Case 2: rank 2
\begin{scope}[shift={(4,0)}]
    \node[above, font=\bfseries\small] at (0,2.5) {$\rank(\fhat(1)) = 2$};
    \draw[->] (0,0) -- (2.2,0) node[right] {\footnotesize $e_1$};
    \draw[->] (0,0) -- (0,2.2) node[above] {\footnotesize $e_2$};
    \draw[->] (0,0) -- (-0.9,-0.7) node[below left] {\footnotesize $e_3$};
    \fill[blue!10, opacity=0.7] (-1.5,-0.5) -- (1.5,-0.5) -- (1.5,1.5) -- (-1.5,1.5) -- cycle;
    \draw[very thick, blue!70!black] (-1.5,-0.5) -- (1.5,-0.5) -- (1.5,1.5) -- (-1.5,1.5) -- cycle;
    \node[blue!70!black, font=\footnotesize] at (0,0.5) {$\mathrm{row}(\fhat(1))$};
    \fill[orange!80!black] (0,0) circle (2pt);
    \node[orange!80!black, font=\footnotesize] at (1.8,-1.2) {$\Stab_G(f) = \{e\}$};
    \node[below, font=\footnotesize] at (0,-2.3) {$\Stab_G(f)$ trivial, fibers $2$-dim};
\end{scope}
\end{tikzpicture}
\caption{Stabilizer of a bandlimited function on $\mathrm{SO}(3)$ with Fourier support $\Lambda(f) = \{0,1\}$, as described in \Cref{ex:so3}. Left: $\fhat(1)$ has rank $1$, giving $\Stab_G(f) \cong \mathrm{SO}(2)$. Right: $\fhat(1)$ has rank $2$, giving $\Stab_G(f) = \{e\}$. In both cases, the null fibers are two-dimensional at generic points.}
\label{fig:so3}
\end{figure}

\subsection{$\mathrm{SE}(3)$ on rigid body configurations}\label{ex:se3}

Let $G = \mathrm{SE}(3) = \RR^3 \rtimes \mathrm{SO}(3)$ act on itself by left multiplication. Since $\mathrm{SE}(3)$ is noncompact, the Peter--Weyl theorem does not directly apply, but the stabilizer $\Stab_G(f)$ and null fibers $N_G(f, h_0)$ are well-defined as closed subsets of $G$. The stabilizer detects which rigid body transformations are invisible to the model: if $\Stab_G(f)$ contains the translation subgroup $\RR^3$, the model is insensitive to position, and if it contains a copy of $\mathrm{SO}(2)$ about some axis, the model is insensitive to rotations about that axis. For a sufficiently expressive model, $\Stab_G(f) = \{e\}$ generically, but even then the null fibers are large. The group $\mathrm{SE}(3)$ is six-dimensional, so for a real-valued model, the null fibers have dimension at least five at generic inputs, providing five dimensions of pointwise masking even when no global invariance exists.

\section{Applications}\label{sec:applications}

The stabilizer provides an algebraically exact certificate of invariance, but its utility depends on it being nontrivial. The null fiber framework resolves this limitation. In this section, we describe three applications where the distinction between the stabilizer and null fibers is practically significant.

\subsection{Data Masking}\label{sec:encryption}

The stabilizer endows every model with a global data masking scheme, but its utility is limited by the requirement that $\Stab_G(f)$ be nontrivial. Null fibers provide a strictly more general pointwise scheme that works for every model.

\begin{proposition}[Model-as-decoder]\label{prop:auto-decrypt}
Let $\pi$ be a representation of $G$ on $V$, and let $f\colon V \to \CC$. For any $x \in V$ and any $g \in N_G(f, x)$,
\begin{equation}
    f(\pi(g^{-1}) \cdot x) = f(x).
\end{equation}
In particular, if $g \in \Stab_G(f)$, the same identity holds for all $x$ simultaneously.
\end{proposition}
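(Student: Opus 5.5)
The plan is to unwind the definitions, since the proposition is essentially a restatement of Definition~\ref{def:null-fiber} and Definition~\ref{def:stabilizer} in the form used by the masking protocol. First, fix $x \in V$ and $g \in N_G(f,x)$. By \eqref{eq:null-fiber}, membership in $N_G(f,x)$ means exactly that $f(\pi(g^{-1})\cdot x) = f(x)$. This gives the first identity with no further work.

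One small bookkeeping point must be addressed before this is airtight. Definition~\ref{def:null-fiber} was stated for $f\colon V \to \RR^m$, while the proposition allows $f\colon V \to \CC$. Following the convention stated at the start of Section~\ref{sec:null-fibers-subgroups}, I would identify $\CC \cong \RR^2$. The defining equation of the null fiber is then an equality of complex values, and the argument is unchanged; no regularity, smoothness, or compactness is needed.

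For the second assertion, take $g \in \Stab_G(f)$. By \eqref{eq:stabilizer}, $\Stab_G(f) = \bigcap_{x\in V} N_G(f,x)$, which is also recorded in Proposition~\ref{prop:basic-properties}(2). Hence $g \in N_G(f,x)$ for every $x \in V$. Applying the first part at each $x$ gives $f(\pi(g^{-1})\cdot x) = f(x)$ simultaneously for all $x$; this is also literally the defining condition of the stabilizer.

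I expect no real obstacle; the statement is definitional. The only place to be careful is consistency of conventions, namely $g^{-1}$ versus $g$ in the action. I would note that a masked input $\pi(g^{-1})\cdot x$ is decoded by $f$ to the original output. I would also note that, because $\Stab_G(f)$ is a group by Proposition~\ref{prop:basic-properties}(1), one may equally use $\pi(g)\cdot x$ in the global case. The pointwise null fiber need not be closed under inversion, so there the convention as stated must be respected.
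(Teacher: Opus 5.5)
Your proposal is correct and matches the paper. The paper gives no separate proof of this proposition because it is an immediate unwinding of Definitions~\ref{def:null-fiber} and~\ref{def:stabilizer}, exactly as you do it. Your side remarks are also accurate: the $\CC \cong \RR^2$ convention, and the fact that the stabilizer is closed under inversion while an individual null fiber need not be.
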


The global protocol proceeds as follows. Alice and Bob share a trained model $f$ and the stabilizer $\Stab_G(f)$. Alice samples $g \in \Stab_G(f)$ from the Haar measure and transmits $c = \pi(g^{-1}) \cdot x$. Bob evaluates $f(c) = f(x)$, recovering the inference $f(x)$ from the masked data $c$. The pointwise protocol requires only the model and representation, not a nontrivial stabilizer. Alice holds data $x \in V$ and computes $N_G(f, x)$, the level set of $g \mapsto f(\pi(g^{-1}) \cdot x)$ through the identity. She samples $g$ from this set and transmits $c = \pi(g^{-1}) \cdot x$. Bob evaluates $f(c) = f(x)$. The preimage theorem guarantees that $N_G(f,x)$ has dimension at least $\dim G - 1$ at regular points, so the scheme applies to any model regardless of whether its stabilizer is trivial.

% Requires: \usetikzlibrary{arrows.meta, positioning, calc}
\begin{figure}[t]
\centering
\begin{tikzpicture}[
    >=Stealth,
    font=\small,
    box/.style={draw, rounded corners, minimum width=2.4cm, minimum height=0.95cm,
                align=center, thick},
    party/.style={box, fill=teal!10, draw=teal!55!black},
    adversary/.style={box, fill=red!8, draw=red!55!black},
    secret/.style={draw, rounded corners, dashed, fill=black!4, align=center,
                   minimum height=0.95cm, inner sep=6pt},
    leader/.style={dashed, gray!55},
]
% --- legitimate parties ---
\node[party] (alice) {Alice\\[-2pt]{\footnotesize holds $x \in V$}};
\node[party, right=5cm of alice] (bob) {Bob\\[-2pt]{\footnotesize evaluates $f$}};
\coordinate (mid) at ($(alice)!0.5!(bob)$);

% --- shared secret banner above the midpoint ---
\node[secret, above=1.1cm of mid] (key)
    {Shared secret\\[-1pt]{\footnotesize model $f$, representation $\pi$}};
\draw[leader] (key.south) -- (alice.north);
\draw[leader] (key.south) -- (bob.north);

% --- public channel ---
\draw[->, thick] (alice) --
    node[above, font=\footnotesize] {$c = \pi(g^{-1}) \cdot x$}
    node[below, font=\footnotesize, text=gray, pos=0.3] {$g \in N_G(f, x)$} (bob);

% --- Bob's recovered output ---
\node[right=0.5cm of bob, font=\footnotesize] (out) {$f(c) = f(x)$};
\draw[->, thick] (bob) -- (out);

% --- eavesdropper ---
\node[adversary, below=1.7cm of mid] (eve)
    {Eve\\[-2pt]{\footnotesize sees $c$, cannot recover $x$}};
\draw[->, thick, dashed, red!55] (mid) --
    node[right=1pt, font=\footnotesize, text=gray] {intercepts $c$} (eve.north);
\node[below=0.25cm of eve, font=\footnotesize, align=center, text=red!55!black]
    {x hidden in $(G\cdot x)\cap f^{-1}(f(x))$};
\end{tikzpicture}
\caption{The null fiber masking protocol. Alice masks her data $x$ using
$g \in N_G(f, x)$ and transmits $c = \pi(g^{-1}) \cdot x$. Bob recovers $f(x)$ by
evaluating $f$ on $c$. Eve sees $c$ but cannot recover $f(x)$ without the model.}
\label{fig:protocol}
\end{figure}

Let us analyze the security of this protocol. An adversary Eve intercepts $c = \pi(g^{-1}) \cdot x$ but has access to neither $g$ nor $f$. The masked input lies in the masking set $M(x) = (G \cdot x) \cap f^{-1}(f(x)) \subseteq V$. An adversary who knows the group $G$ and representation $\pi$ can recover the orbit $G \cdot c = G \cdot x$ from $c$. Knowledge of $f$ would let her narrow this to $M(x)$, but no further because every point of $M(x)$ yields the same inference, so $f$ cannot single out $x$ among them. Recovering $x$ itself requires the masking element $g$, which Alice never transmits. The security therefore rests on Eve having access to neither $f$ nor $g$, and this holds even when the model has a trivial stabilizer.

The scheme is symmetric, with the trained model $f$ as the shared secret. The pointwise scheme has greater masking power than the global scheme in general, since the null fiber $N_G(f, x) \subseteq G$ always contains $\Stab_G(f)$ and is typically much larger, so the masking set $M(x)$ it induces in $V$ is correspondingly larger than the orbit of $\Stab_G(f)$ through $x$. The cost is that Alice must compute $N_G(f, x)$ for her specific $x$, which by the algorithm of \Cref{sec:computation} requires on the order of ten model evaluations.

Even if Eve knows the task, the architecture, the training data, and the hyperparameters, and trains her own surrogate model $\hat{f}$, the null fibers of $\hat{f}$ generically differ from those of $f$. A surrogate does not even recover $M(x)$, since its level set through $c$ is a different subset of the orbit. We verify this experimentally in \Cref{sec:fingerprinting-experiments}; for the QM9 dataset, a null fiber element of one model moves a second model's prediction by as much as an unrelated random transformation does, with a ten-order-of-magnitude gap between own-model and cross-model verification. Possessing a surrogate is not equivalent to possessing the key.

The practical use case is outsourced or distributed inference on sensitive data. A hospital sends a masked scan to a cloud service running a diagnostic model, and the cloud service returns the diagnosis but never sees the scan. A sensor platform sends masked readings to a fusion center, and the fusion center runs the joint model but never accesses the raw sensor data. This is related to the goal of homomorphic encryption~\cite{gentry2009fully,gilad2016cryptonets}, which achieves inference on encrypted data at enormous computational cost and with cryptographic security guarantees. The null fiber scheme is cheaper, requiring a handful of gradient evaluations, but offers masking with a model-tied leakage profile rather than semantic security.

In the event that the model itself cannot be shared with the masking agent, which could be the case if the model is proprietary or classified, a discretization of the dataspace can be constructed and an element from the null fiber at each of the finite number of points can be computed and provided to the masking agent. Of course, if the masking agent then sends their data to the model holder, then the model holder is able to recover the data since they devised the discrete lookup table. This problem is overcome by introducing a mutually trusted third party who the model holder trusts with their model. The third party uses their knowledge of the model to create the discrete null fiber element lookup table and shares it with the masking agent. The masking agent then masks their data according to this table and sends it to the model holder. The model holder is unable to recover the data without access to the lookup table that was used but is still able to run inference. We discuss this application more in Section~\ref{sec:privacy}.

\subsubsection{Experimental demonstration}\label{sec:encryption-experiments}

We demonstrate the null fiber data masking scheme on two tasks: molecular property prediction under $\mathrm{SO}(3)$~\cite{ramakrishnan2014quantum,ruddigkeit2012enumeration} and spherical image classification under the Möbius group $\mathrm{PSL}(2, \CC)$~\cite{cohen2018spherical}. The first tests the gradient-based algorithm of \Cref{sec:computation} and establishes that exact null fiber elements can be found cheaply. The second tests whether the masking is visually effective, and whether null fiber information can be precomputed for use without model access.

For the molecular experiment, we train a four-layer MLP on the QM9 dataset to predict atomization energy from raw atomic coordinates. The model is deliberately non-equivariant, receiving only zero-padded flattened coordinates as input, so that its stabilizer is generically trivial and the null fibers are the only source of invariance. We fix a test molecule and define the orbit map $\psi(v) = f(\pi(\exp(-v)) \cdot x)$ for $v \in \mathfrak{so}(3) \cong \RR^3$, parameterized via the Rodrigues formula. One backpropagation pass yields $\nabla f(x)$, from which the Jacobian $J \in \RR^{1 \times 3}$ is assembled via~\eqref{eq:jacobian-formula}. The kernel of $J$ is a two-dimensional plane in $\mathfrak{so}(3)$, representing the infinitesimal rotations invisible to the model at $x$. Moving along a kernel direction and applying Newton correction, the algorithm finds exact null fiber elements at rotation angles of $45^\circ$ and $189^\circ$, with the prediction matching the unrotated molecule to $10^{-12}$ eV in the near case and $10^{-10}$ eV in the far case. Brute-force sampling of $10{,}000$ Haar-uniform rotations, by contrast, fails to find a single exact null fiber element: the closest sample is still $0.19$ eV off the level set because the null fiber is a two-dimensional surface in the three-dimensional group $\mathrm{SO}(3)$ and random samples almost never land on a measure-zero set. Finding $100$ distinct null fiber elements by the gradient method, spread across rotation angles from $0^\circ$ to $180^\circ$, requires $2{,}935$ total forward and backward passes. Brute-force search using $10{,}000$ samples produces $100$ approximate elements with a worst-case error of $68$ eV. The comparison is shown in \Cref{fig:gradient-vs-brute}.

\begin{figure}[t]
\centering
\includegraphics[width=\textwidth]{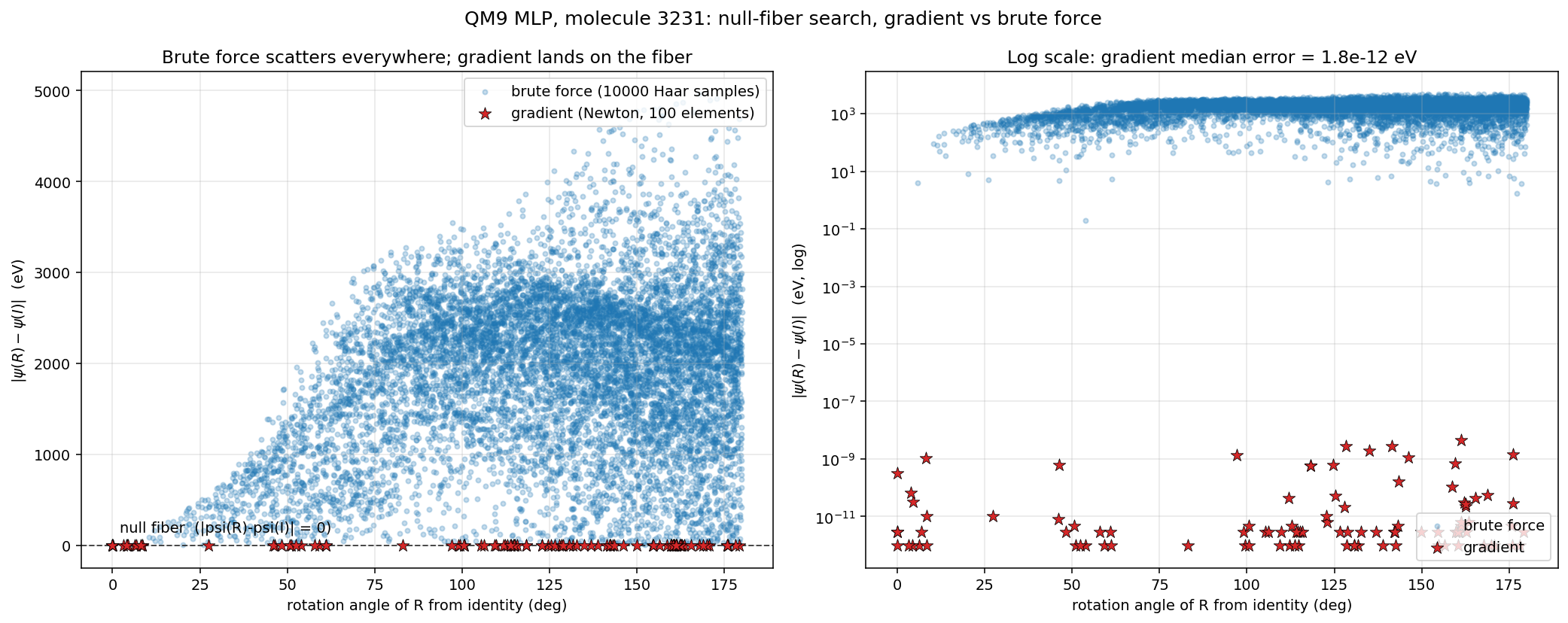}
\caption{Gradient-based null fiber computation versus brute-force sampling on a QM9 test molecule under $\mathrm{SO}(3)$. Left: prediction error $|\psi(R) - \psi(I)|$ versus rotation angle for $10{,}000$ Haar-uniform samples (blue) and $100$ gradient-computed null fiber elements (red). Right: the same data on a logarithmic scale, showing that the gradient method achieves errors with median error below $10^{-11}$~eV.}
\label{fig:gradient-vs-brute}
\end{figure}

Rotations of $\mathrm{SO}(3)$ are rigid isometries, however, and do not disguise visual content. For a steganographic demonstration~\cite{cheddad2010digital,morkel2005overview}, we need a group action that distorts the input. The Möbius group $\mathrm{PSL}(2, \CC)$ acts on the Riemann sphere $S^2 \cong \CC \cup \{\infty\}$ by $g \cdot z = (az + b)/(cz + d)$ for $g = \begin{psmallmatrix} a & b \\ c & d \end{psmallmatrix}$ with $ad - bc = 1$. These transformations are conformal but not isometric; a one-parameter family of boosts $g(t) = \begin{psmallmatrix} \cosh(t/2) & \sinh(t/2) \\ \sinh(t/2) & \cosh(t/2) \end{psmallmatrix}$ pushes content toward one pole as $|t|$ grows, producing severe area distortion that renders the image unrecognizable while remaining a smooth bijection. We train a classifier on spherical MNIST, where each digit is painted onto the sphere and represented by its spherical harmonic coefficients up to bandlimit $L = 10$, and sample $2{,}000$ Möbius transformations of the form $g = R_1 \cdot g(t) \cdot R_2$ with $R_1, R_2$ Haar-uniform in $\mathrm{SU}(2)$ and $|t| \in [0.5, 4.0]$. For each of four test digits, null fiber elements exist at large boost magnitudes $|t| \in [3.4, 4.0]$, producing heavily distorted spherical images that the classifier still labels correctly, while smaller boosts in other directions flip the classification. The direction in $\mathrm{PSL}(2, \CC)$ matters more than the magnitude. An example is shown in \Cref{fig:mobius-steganography}: the null-fiber Möbius transformation renders the digit visually unrecognizable while preserving the correct classification, whereas a transformation outside the null fiber produces a less distorted image that the classifier mislabels.

\begin{figure}[t]
\centering
\includegraphics[width=\textwidth]{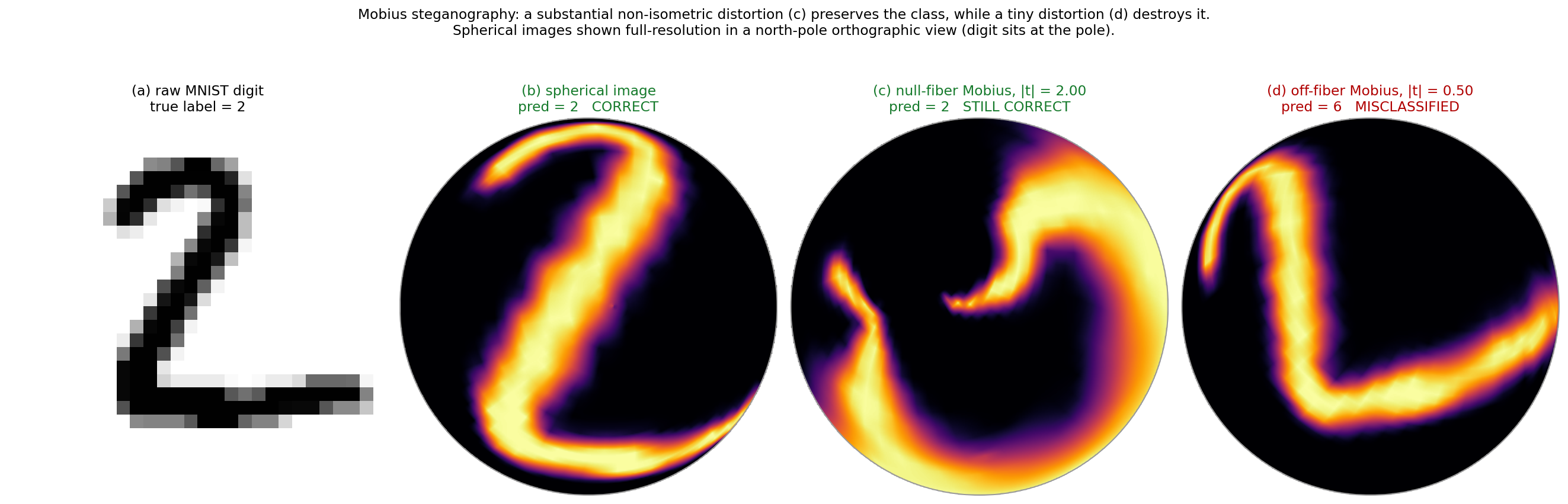}
\caption{M\"obius steganography on spherical MNIST. (a) Raw MNIST digit $2$. (b) The same digit painted on the sphere, correctly classified. (c) A null-fiber M\"obius boost at $|t| = 2$ renders the image unrecognizable, but the classifier still predicts $2$. (d) An off-fiber boost at $|t| = 0.50$ produces mild distortion but flips the prediction to $6$. Spherical images are shown in north-pole orthographic projection.}
\label{fig:mobius-steganography}
\end{figure}

The gradient-based algorithm requires model access at masking time. For settings where the masking agent cannot carry the model, we test a precomputed lookup table. Null fiber elements are computed offline for a set of reference inputs, and at masking time the agent applies the stored transformation from the nearest reference without evaluating the model. On QM9, this fails. The median prediction error after masking is $314$ eV on predictions spanning $8{,}275$ eV, with no correlation between error and distance to the nearest reference. The reason is coverage; $500$ reference molecules in an $87$-dimensional coordinate space leaves every test molecule far from its nearest reference, with median distance $7.8$ \AA, and at this distance the null fiber of the reference bears no relation to the null fiber of the test point. On spherical MNIST, the same approach works. Two hundred reference digits cover the test distribution tightly, with median nearest-reference distance $0.49$ in spherical harmonic coefficient space. The stored Möbius transformation from the nearest reference preserves the $10$-class classification $51.5\%$ of the time, compared to $11\%$ for a random transformation of the same magnitude. For digits in the closest quartile to their reference, the preservation rate reaches $84\%$. The contrast between the two datasets is explained entirely by coverage: spherical MNIST digits cluster tightly on a low-dimensional manifold, while QM9 molecules are geometrically diverse and spread across a high-dimensional space.

The most striking result emerges when the classification task is restricted to a binary channel, as shown in \Cref{fig:binary-channel}. Alice and Bob agree in advance on a pair of classes, say digits $1$ and $2$, and the secret message is which of the two the input belongs to. Alice applies a Möbius boost from the lookup table. The masked image is so distorted that the $10$-class classifier assigns it to an unrelated class, with only $28\%$ of masked digits retaining their original label. But the relative ordering of the two target logits is preserved $95.5\%$ of the time, compared to $62.5\%$ for random masking. The transformation destroys the visible classification while leaving the pairwise discrimination intact, and the secret is recoverable only by someone who knows which two logits to compare. For the harder pair of digits $3$ and $8$, which are more easily confused under distortion, the binary preservation rate is $80\%$ versus $48.5\%$ for random masking. The success of precomputed lookup is thus governed by two factors: coverage density in the input space and the complexity of the information being transmitted. High-dimensional diverse data defeats the lookup table, low-dimensional clustered data admits good coverage with modest tables, and restricting to a binary channel dramatically improves preservation because the transformation need only maintain a relative ordering rather than an absolute classification.

\begin{figure}[t]
\centering
\includegraphics[width=\textwidth]{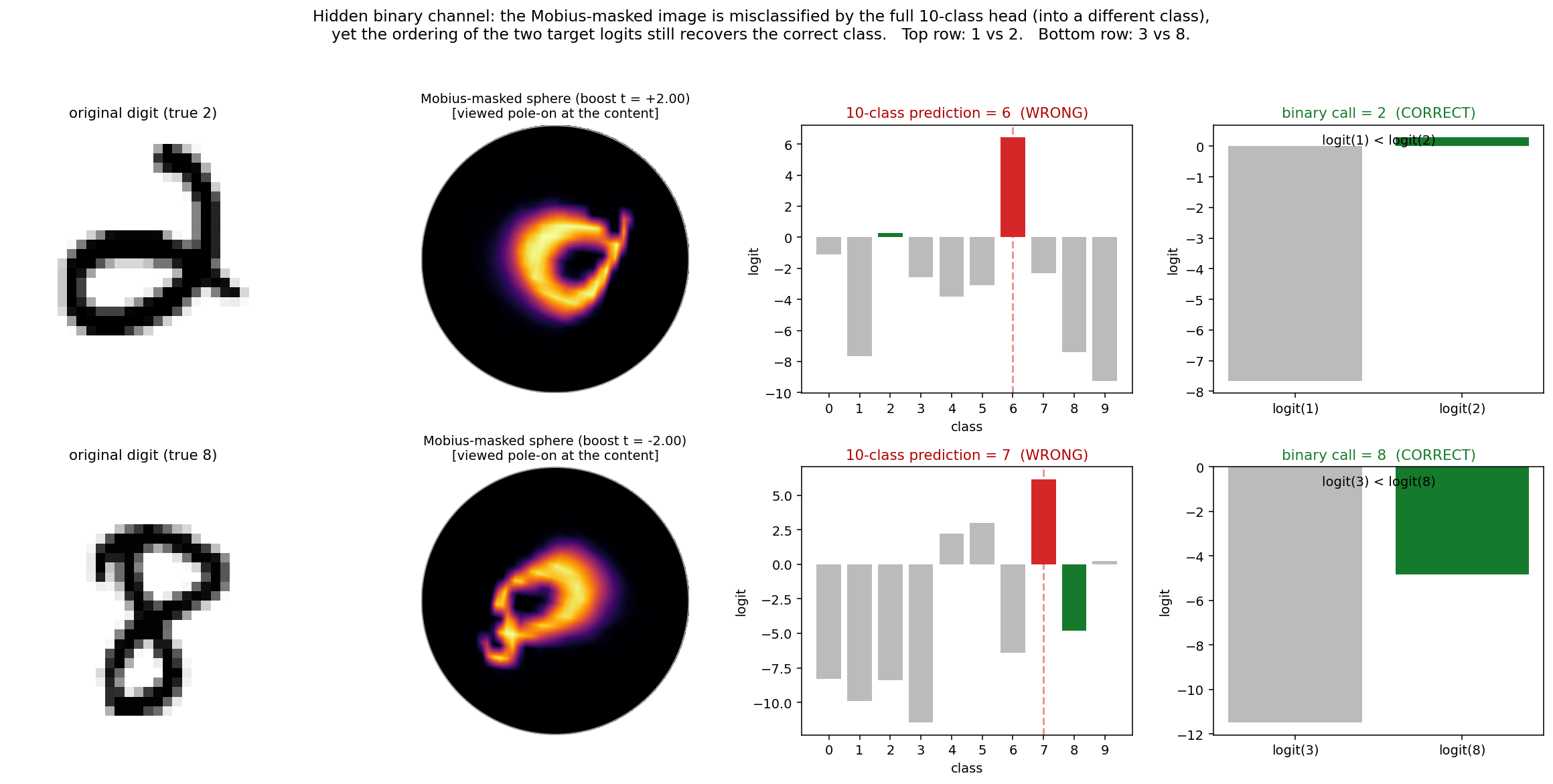}
\caption{Hidden binary channel on spherical MNIST. Top row: a true $2$ is M\"obius-masked and misclassified as $6$ by the $10$-class head, but the relative ordering logit$(1) <$ logit$(2)$ is preserved, correctly recovering the binary label. Bottom row: a true $8$ is misclassified as $7$, but logit$(3) <$ logit$(8)$ is preserved, correctly recovering the binary label. The $10$-class classification is destroyed while the pairwise channel remains intact.}
\label{fig:binary-channel}
\end{figure}

\subsection{Model fingerprinting}\label{sec:fingerprinting}

Intellectual property protection for trained models requires a way to verify that a deployed model was produced by a specific training process. A fingerprint~\cite{cao2021ipguard,lukas2019deep,peng2022fingerprinting,regazzoni2021protecting} should be intrinsic to the model, difficult to forge, and verifiable without revealing the model's parameters. The null fiber provides such a fingerprint. The stabilizer is a natural candidate but is generically trivial for expressive models, as shown in Corollary~\ref{cor:dimension}. Null fibers resolve this completely. Two independently trained models $f_1, f_2$ typically satisfy $N_G(f_1, x) \neq N_G(f_2, x)$ at every input $x$, even when $\Stab_G(f_1) = \Stab_G(f_2) = \{e\}$, because the level sets of different smooth functions are generally transverse. The collection of null fibers across inputs is a fine-grained fingerprint that distinguishes models even when the global invariant cannot.

Verification proceeds as a challenge-response protocol. The verifier selects an input $x$ and computes $y = f(x)$. The prover, who claims ownership of $f$, produces an element $g \in N_G(f, x)$ with $g \neq e$ and sends $x' = \pi(g^{-1}) \cdot x$ to the verifier. The verifier checks that $f(x') = y$ and $x' \neq x$. A party without access to $f$ cannot produce such a $g$ except by brute-force search over $G$, since determining whether a given $g$ lies in $N_G(f, x)$ requires evaluating $f$. The protocol can be repeated at multiple challenge points $x$ to increase confidence, and the prover's ability to consistently produce valid null fiber elements constitutes evidence of access to the model.

The security of this protocol rests on the cost of producing a null fiber element without gradient access to $f$. At a regular input, the null fiber of a real-valued $f$ is a smooth submanifold of codimension one. An adversary who cannot differentiate $f$ therefore cannot reach the fiber by sampling. A Haar-uniform draw lands on it with probability zero, and matching the verification tolerance $\epsilon$ requires $O(1/\epsilon)$ samples, since the $\epsilon$-tube about a codimension-one level set occupies a fraction $\propto \epsilon$ of the group. The protocol exposes no model parameters; the transcript $(x, x')$ certifies only that the prover can exhibit a nonidentity $g$ with $f(\pi(g^{-1}) \cdot x) = f(x)$, a distinct input on which $f$ agrees with its value at $x$.

\subsubsection{Experimental demonstration}\label{sec:fingerprinting-experiments}

We verify that the null fiber serves as a model-specific fingerprint by training two MLPs with identical architecture and data but different random seeds on the QM9 dataset. Both models achieve comparable test error ($335.77$ eV and $337.79$ eV), so any difference in their null fibers is attributable to the random initialization rather than a difference in model quality.

For each of $10$ test molecules, we compute $50$ null fiber elements for model $1$ using the gradient method and evaluate model $2$ on the rotated inputs. The median prediction error for model $1$ on its own null fiber elements is $1.8 \times 10^{-12}$ eV, confirming that the gradient method finds null fiber elements. The median prediction error for model $2$ on model $1$'s null fiber elements is $401$ eV, comparable to the error produced by a random rotation. The reverse experiment, computing null fiber elements for model $2$ and evaluating model $1$, produces the same pattern. A rotation that is exactly invisible to one model moves the other model's prediction by hundreds of electron volts.

We then simulate the challenge-response protocol $1{,}000$ times. The verifier selects a random test molecule $x$ and evaluates $f(x)$. The prover computes $g \in N_G(f, x)$ using the gradient method and sends $x' = \pi(g^{-1}) \cdot x$. The verifier checks whether $|f(x') - f(x)| < \epsilon$. At a threshold of $\epsilon = 0.01$ eV, model $1$'s null fiber elements pass verification against model $1$ $99.2\%$ of the time and fail against model $2$ $96.9\%$ of the time. Random Haar-uniform rotations fail against both models $100\%$ of the time. The results are symmetric; model $2$'s null fiber elements pass against model $2$ at $99.5\%$ and fail against model $1$ at $96.6\%$. The distribution of prediction errors, shown in \Cref{fig:fingerprinting}, exhibits a clean bimodal structure with a gap of roughly ten orders of magnitude between own-fiber errors clustered below $10^{-9}$ eV and cross-fiber errors clustered above $10$ eV.

\begin{figure}[t]
\centering
\includegraphics[width=\textwidth]{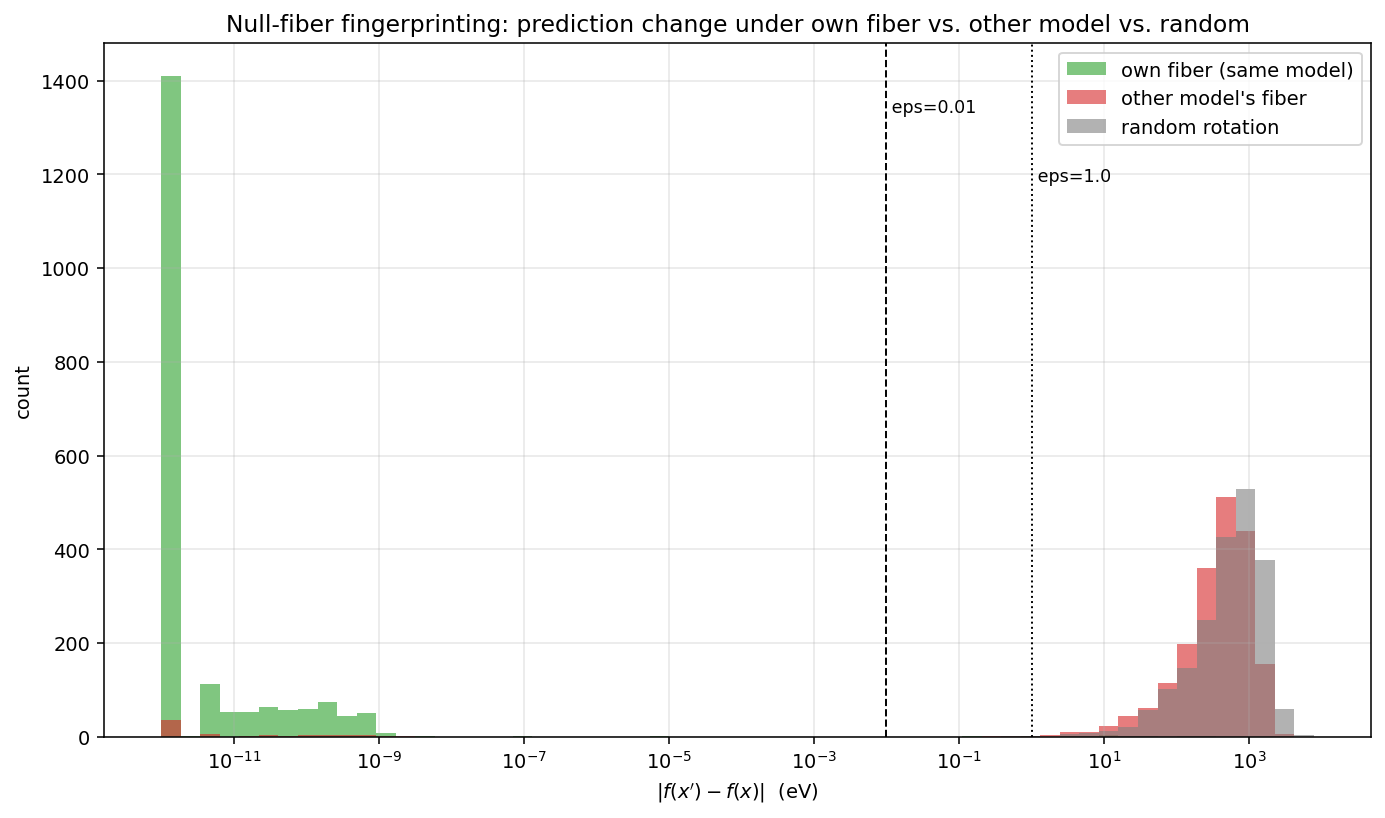}
\caption{Model fingerprinting via null fibers on QM9 regression. Distribution of prediction error $|f(x') - f(x)|$ when the rotated input $x'$ is drawn from the prover's own null fiber (left peak, below $10^{-9}$ eV), from the other model's null fiber (right peak, above $10$ eV), and from Haar-uniform random rotations (rightmost, above $100$ eV). The ten-order-of-magnitude gap between own-fiber and cross-fiber errors enables reliable verification at any reasonable threshold.}
\label{fig:fingerprinting}
\end{figure}

The small fraction of cross-model passes ($\sim 3\%$) consists of null fiber elements near the identity, where small rotation angles trivially preserve both models. Filtering to rotation angles above $20^\circ$ eliminates these. The $\sim 0.5\%$ own-model failures are Newton iterations that did not converge within the step budget, not a limitation of the framework. These two models share their architecture and training data and differ only in random initialization, which is the hardest case for distinguishing null fibers. Models with different architectures or training data will have more distinct null fibers, not less.

For classifiers, fingerprinting via the full argmax is weaker. The argmax-preserving region in $G$ is open, not a measure-zero level set, and two models trained on the same task share most of it. On spherical MNIST under $\mathrm{SO}(3)$, a rotation preserving one model's classification also preserves the other's $76\%$ of the time, compared to $11\%$ for random rotations. Under M\"obius transformations the cross-preservation drops to $69\%$, a modest improvement but still potentially too high for reliable verification. The $\mathrm{SO}(3)$ results (weaker case) are shown in \Cref{fig:fingerprinting-mnist}.

\begin{figure}[t]
\centering
\includegraphics[width=\textwidth]{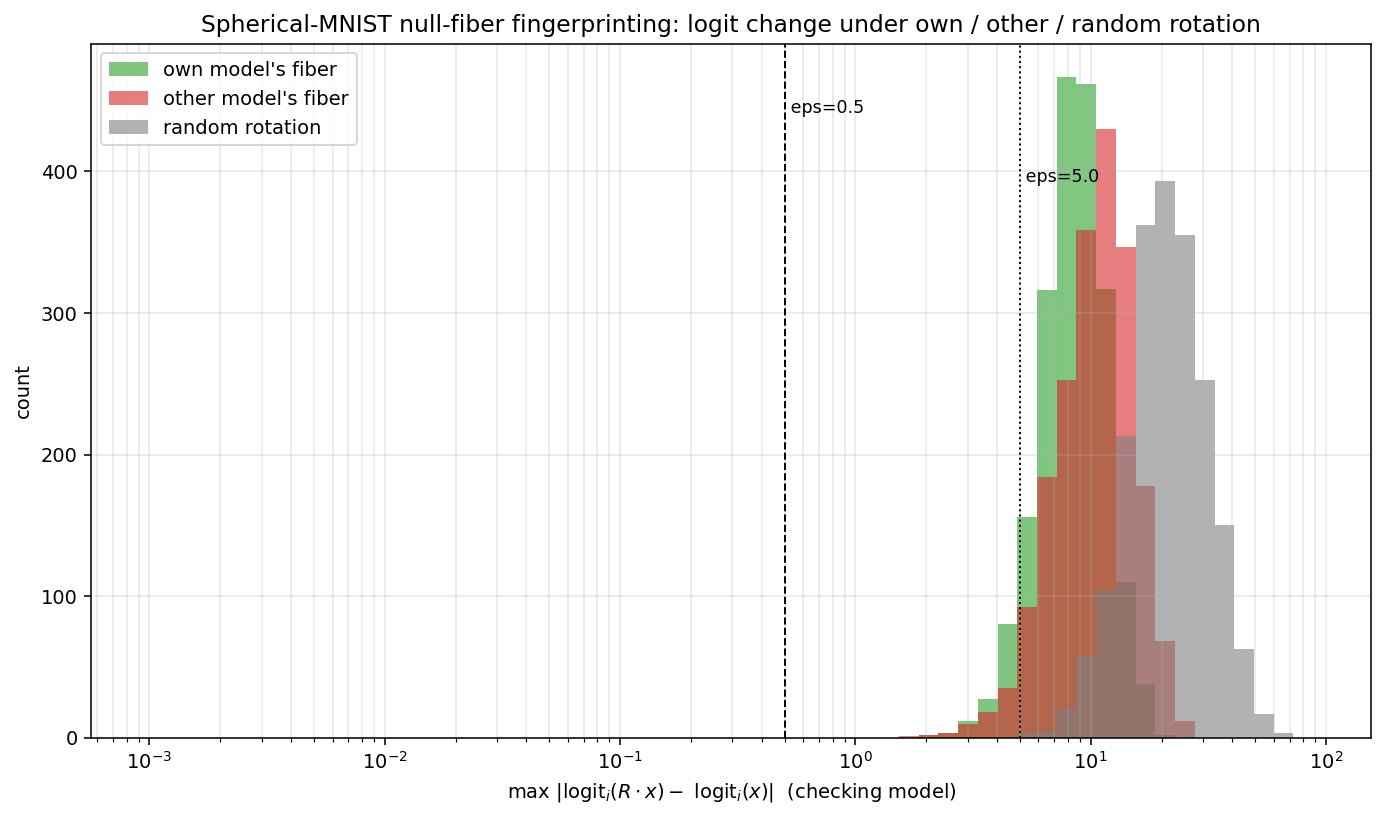}
\caption{Classification fingerprinting on spherical MNIST via the full argmax with $\mathrm{SO}(3)$ rotations. The own-model and cross-model distributions of logit deviation overlap heavily, giving a weaker fingerprint than observed with QM9 data. The argmax-preserving region is open, and two similarly-trained models share most of it.}
\label{fig:fingerprinting-mnist}
\end{figure}

Targeting a scalar quantity instead of the full argmax recovers the codimension-1 level set structure that makes regression fingerprinting work. The prover declares a logit pair $(c_1, c_2)$ and produces a group element that preserves the margin $m = \mathrm{logit}_{c_1} - \mathrm{logit}_{c_2}$. Since $m$ is a scalar, its level set is a two-dimensional surface in $\mathrm{SO}(3)$, generically unique to each model. Under $\mathrm{SO}(3)$ at a threshold of $\epsilon = 0.1$, the prover's own model passes $44.0\%$ of the time while the cross-model pass rate drops to $1.4\%$, a rejection ratio of roughly $30\times$. At $\epsilon = 1.0$, the own-model pass rate reaches $94.1\%$ against a cross-model rate of $14.7\%$. Under M\"obius transformations with a dramatic-distortion constraint $|t| > 1.5$, the cross-rejection is comparable but the own-model pass rate is lower: $19.1\%$ at $\epsilon = 0.1$ and $75.0\%$ at $\epsilon = 1.0$. The dramatic-distortion constraint forces every sampled transform far from the identity, where even the best candidate has a larger margin deviation than an SO(3) rotation at a moderate angle. The richer group does not help when the constraint prevents fine-grained search near the level set. Both pair-margin results are shown in \Cref{fig:fingerprinting-pair}.

\begin{figure}[t]
\centering
\includegraphics[width=\textwidth]{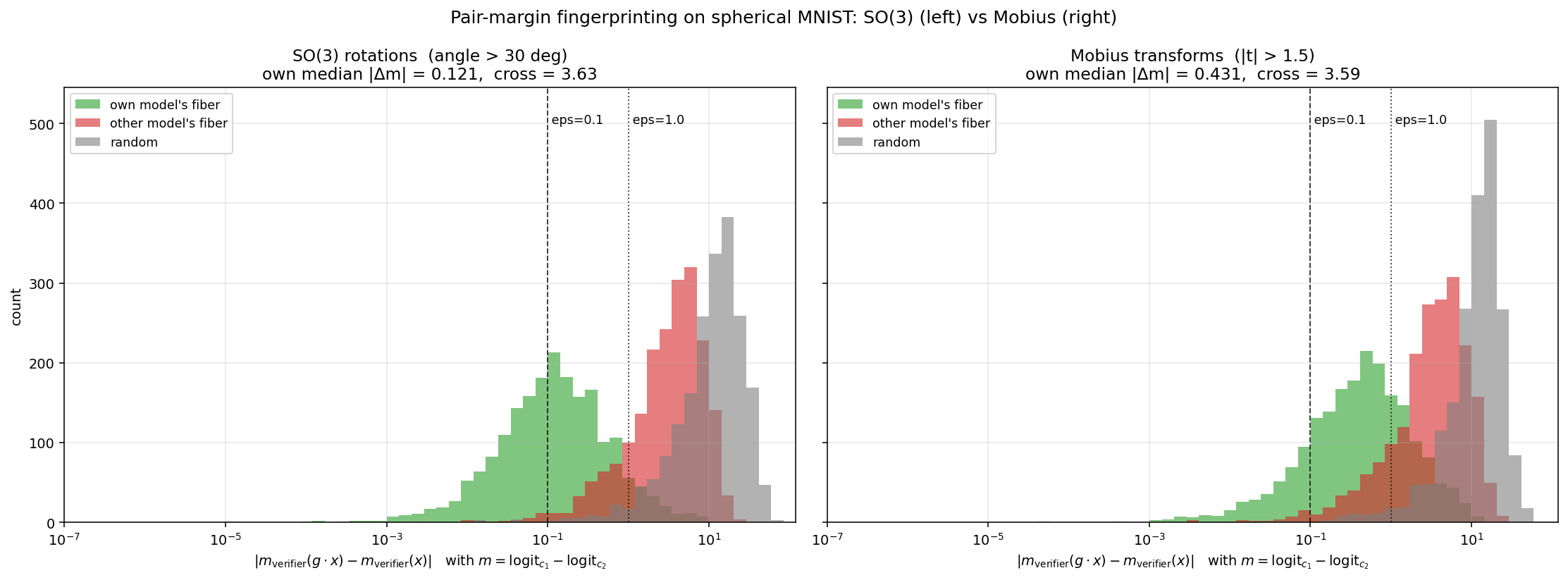}
\caption{Scalar pair-margin fingerprinting on spherical MNIST. Left: $\mathrm{SO}(3)$ with rotation angle above $30^\circ$. Right: M\"obius with $|t| > 1.5$. Targeting the scalar margin between two logits recovers a clean separation between own-model (green) and cross-model (red) distributions. $\mathrm{SO}(3)$ gives stronger separation because the moderate-angle constraint allows finer search near the level set, while the dramatic-distortion constraint on M\"obius forces all candidates far from the identity.}
\label{fig:fingerprinting-pair}
\end{figure}

\subsection{Privacy-preserving computation}\label{sec:privacy}

In federated learning and other distributed protocols~\cite{bonawitz2017practical,kairouz2021advances,li2020review,mammen2021federated,mcmahan2017communication,wen2023survey}, participants must share information derived from their local data without revealing the data itself. The null fiber provides a masking scheme with an exact utility guarantee. Each party holding local data $x$ can transmit $\pi(g^{-1}) \cdot x$ for a random $g \in N_G(f, x)$ instead of $x$ itself. Since $f(\pi(g^{-1}) \cdot x) = f(x)$ by definition of the null fiber, any downstream computation that depends only on the model's output is unaffected. The transmitted point ranges over the masking set $M(x) = (G \cdot x) \cap f^{-1}(f(x)) \subseteq V$, so what is hidden is the position of $x$ within $M(x)$. How much this protects depends on the orbit geometry; the masked variation is exactly the part of $x$ that can be moved within its orbit without changing $f(x)$, and as the experiments of \Cref{sec:privacy-experiments} show, this can range from substantial to negligible depending on the group action.

The utility guarantee is exact, since the model output is preserved identically. This contrasts with differential privacy~\cite{dwork2008differential,dwork2006calibrating}, which introduces calibrated noise that degrades utility in exchange for a quantifiable privacy bound. The two approaches are complementary rather than competing. Differential privacy protects against all possible inferences by bounding the influence of any single data point, at the cost of added noise. The null fiber masking is lossless along the fiber directions but leaves untouched the component of $x$ that $f$ depends on, and provides no formal bound on what an adversary can infer. The practical value depends on how the sensitive information in $x$ is distributed relative to the fiber geometry.

An example protocol for transmission of HIPAA protected medical data is as follows. The masking agent and owner of the sensitive data is a healthcare provider such as a hospital which needs to outsource inference of patient test data to an external organization specializing in the particular test. The masking agent does not want to send the data itself to the external organization (the model holder), so it proposes to randomly select group elements from the null fibers at each point and use them to mask the data before sending. The model holder pushes back because this procedure requires knowledge of their proprietary model. They cannot solve the problem by pre-computing a lookup table of null fiber elements at a sufficiently dense set of possible data points because the model holder then has access to the lookup table and can undo the masking upon arrival. So, the masking agent and model holder agree to consult a mutually trusted third party. The model holder agrees to share knowledge of the model with the trusted party who then computes the null fiber lookup table and passes it to the masking agent. The masking agent then masks their data using the lookup table and sends it to the model holder for inference. Since the model holder does not have access to the lookup table, they cannot recover the sensitive data, but model inference is unaffected. This procedure requires trust between the model holder and masking agent and the third party; the model holder trusts the third party with their proprietary model and the masking agent trusts the third party not to deliver the lookup table to the model holder. We illustrate this protocol in Figure~\ref{fig:hipaa-protocol}.

\begin{figure}[t]
\centering
\begin{tikzpicture}[
    >=Stealth,
    font=\small,
    box/.style={draw, rounded corners, align=center, thick,
                minimum width=3.1cm, minimum height=1.0cm},
    hosp/.style={box, fill=teal!10, draw=teal!55!black},
    model/.style={box, fill=blue!8, draw=blue!55!black},
    ttp/.style={box, fill=orange!13, draw=orange!70!black},
    slbl/.style={font=\footnotesize},
    note/.style={font=\footnotesize\itshape, text=gray},
]
% --- parties ---
\node[hosp]  (hosp)  at (0,0)      {Masking agent\\[-2pt]{\footnotesize holds $x$}};
\node[model] (model) at (9.8,0)    {Model holder\\[-2pt]{\footnotesize proprietary model $f$}};
\node[ttp]   (ttp)   at (4.9,4.0)  {Trusted third party\\[-2pt]{\footnotesize builds null-fiber table}};

% --- setup phase (upper triangle edges) ---
\draw[->, thick] (model) -- node[slbl, sloped, above, pos=0.5]{\textbf{1.}~shares model $f$} (ttp);
\draw[->, thick] (ttp)   -- node[slbl, sloped, above, pos=0.6]{\textbf{2.}~lookup table $\{g\in N_G(f,x_i)\}$} (hosp);

% --- inference phase (bottom curved edges) ---
\draw[->, thick] (hosp)  to[bend left=12]
    node[slbl, above, pos=0.5]{\textbf{3.}~masked data $c=\pi(g^{-1})\cdot x$} (model);
\draw[->, thick] (model) to[bend left=12]
    node[slbl, below, pos=0.5]{\textbf{4.}~inference $f(c)=f(x)$} (hosp);

% --- security notes ---
\node[note, above=2pt of ttp]{never sees patient data};
\node[note, below=3pt of model, align=center]{no table $\Rightarrow$ cannot recover $x$};
\end{tikzpicture}
\caption{Trusted-third-party masking protocol for outsourced inference on
HIPAA-protected data. The model holder shares its model $f$ with a trusted
third party (1), who computes a null-fiber lookup table and passes it to the
masking agent (2). The masking agent masks each record as $c=\pi(g^{-1})\cdot x$ with
$g$ drawn from the table and transmits $c$ (3); the model holder returns the
inference $f(c)=f(x)$ (4). Because the model holder never receives the lookup
table, it cannot invert the mask to recover $x$, while inference is unaffected.}
\label{fig:hipaa-protocol}
\end{figure}

\subsubsection{Experimental demonstration}\label{sec:privacy-experiments}

The utility of null fiber masking is exact, but its privacy value depends on what information the group action moves. We demonstrate this with two experiments that produce opposite outcomes, revealing the critical role of the group action.

On spherical MNIST with Möbius transformations, the privacy is substantive. We take $200$ test digits and for each find a Möbius transformation $g$ with boost magnitude $|t| > 2$ that preserves the classifier's prediction, using the sampling approach of \Cref{sec:encryption-experiments}. All $200$ digits have such a transformation, with median $|t| = 2.50$. The classifier prediction is preserved for $100\%$ of the masked digits. The pixel-space correlation between the original and masked spherical images drops to $-0.02$, and the spherical harmonic coefficient correlation drops to $0.19$: the masked image bears no visual resemblance to the original. To quantify the privacy gain against an independent adversary, we train a separate attack MLP on unmasked spherical harmonic coefficients to predict the digit class, achieving $99.5\%$ accuracy on clean inputs. On null-fiber masked inputs, the attack accuracy drops to $54\%$. The adversary cannot reliably read the digit from the masked image, even though the classifier can.

The comparison with additive noise sharpens the point. We add Gaussian noise to the spherical harmonic coefficients, calibrated to produce the same pixel correlation destruction as the Möbius masking. At matched pixel correlation, the noise preserves the classifier prediction only $7.5\%$ of the time, compared to $100\%$ for the null fiber. The null fiber achieves privacy at zero utility cost; noise at the same level of privacy destroys the model output. A random Möbius transformation of the same magnitude, not selected from the null fiber, also destroys the classifier ($8.5\%$ preservation), confirming that the null fiber selection is essential and that the utility preservation is not a property of the Möbius group in general but of the specific fiber element chosen. The results are summarized in \Cref{fig:privacy-mobius}.

\begin{figure}[t]
\centering
\includegraphics[width=\textwidth]{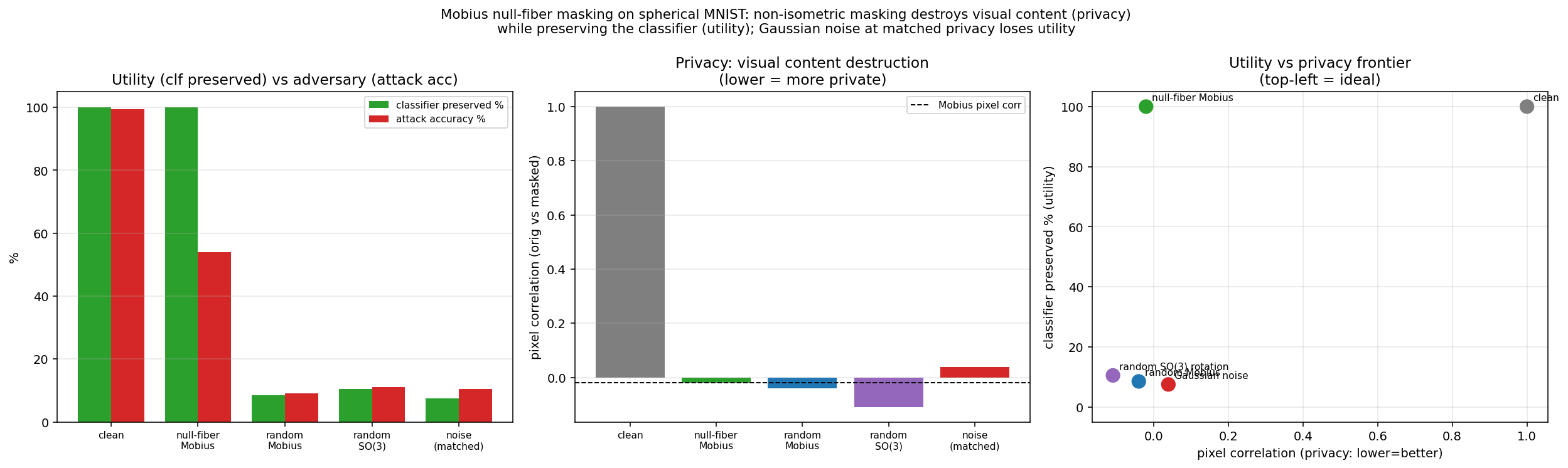}
\caption{Privacy-preserving Möbius masking on spherical MNIST. Null-fiber masking preserves the classifier prediction for $100\%$ of inputs while reducing pixel correlation to near zero and attack accuracy from $99.5\%$ to $54\%$. Gaussian noise at matched pixel correlation preserves the classifier only $7.5\%$ of the time. Random Möbius transformations of the same magnitude destroy the classifier comparably to noise.}
\label{fig:privacy-mobius}
\end{figure}

The privacy guarantee depends critically on the group action being non-isometric. On the QM9 molecular dataset under $\mathrm{SO}(3)$, null fiber masking hides only the orientation of the molecule, because rotations preserve all internal geometry; bond lengths, angles, and the full pairwise distance matrix are invariant. We verify this by computing null fiber elements for $200$ test molecules with rotation angles above $45^\circ$, confirming that the model prediction is preserved to $10^{-12}$ eV while the pairwise distance matrix correlation remains exactly $1.0$. An attack model trained to predict the radius of gyration from atomic coordinates achieves $0.032$ \AA{} error on clean inputs and $0.098$ \AA{} on masked inputs, a negligible degradation. For comparison, Gaussian noise calibrated to $(\epsilon, \delta) = (1, 10^{-5})$-differential privacy reduces the distance matrix correlation to $0.006$ and increases the radius-of-gyration attack error to $34.4$ \AA, providing genuine geometric privacy, but at the cost of an $80{,}120$ eV model prediction error on a quantity spanning roughly $8{,}000$ eV.

The two experiments together delineate the scope of null fiber privacy. When the group acts non-isometrically, as $\mathrm{PSL}(2, \CC)$ does on spherical images, the masking genuinely distorts the data while preserving the model output, providing privacy at zero utility cost. When the group acts isometrically, as $\mathrm{SO}(3)$ does on point clouds, the masking hides only the information that the group action moves, which for rigid rotations is only the global orientation.

\section{Discussion}\label{sec:discussion}

The central message of this paper is that the stabilizer, while algebraically clean, is not the right object for applications that require guaranteed information loss; the null fiber is. The stabilizer is the intersection of all null fibers, and for a generic function on a nonabelian group this intersection collapses to the identity. The Euclidean case masks this fact because the linearity of the first layer and the commutativity of $(\RR^n, +)$ force every null fiber to be the same subspace, so that the intersection loses nothing. Once either assumption is relaxed, the stabilizer shrinks and the null fiber becomes the primary carrier of geometric information about what the model discards.

The framework developed here rests on two theorems and one definition. The definition is the null fiber $N_G(f, x) = \{g \in G : f(\pi(g^{-1}) \cdot x) = f(x)\}$, which is the fiber of the orbit map over the identity. The first theorem is the preimage theorem, which guarantees that this fiber has dimension at least $\dim G - 1$ for a real-valued model at regular inputs, regardless of architecture. The second is the Fourier characterization of \Cref{thm:fourier-characterization}, which for compact groups acting on themselves reduces the stabilizer to a finite collection of row-space stabilizers of the Fourier coefficient matrices, and reduces the null fiber at a point to a single scalar equation.

The gradient-based algorithm of \Cref{sec:computation} makes the framework practical. Computing a null fiber element requires one backpropagation pass to obtain the Jacobian of the orbit map, followed by a few Newton iterations to correct the first-order approximation. The total cost is comparable to a handful of training steps, independent of the dimension of the group. The experiments confirm that this produces null fiber elements to machine precision, and that brute-force sampling cannot compete. Indeed, the null fiber is a measure-zero submanifold, and random samples almost never land on it. For settings where model access is unavailable at masking time, precomputed lookup tables offer a partial substitute whose effectiveness depends on coverage density in the input space.

The relationship to equivariant networks is complementary. A $G$-invariant model satisfies $\Stab_G(f) = G$ by construction, and the null fiber framework adds nothing. The framework becomes informative precisely when invariance fails, either because the architecture does not enforce it or because the learned weights break it. In this setting, the stabilizer measures how much global symmetry survives, and the null fibers measure how much symmetry survives at each input.

The quotient $G / \Stab_G(f)$ is a group when the stabilizer is normal, as it always is for abelian $G$, and a homogeneous space otherwise. In either case, $f$ factors through it, and the quotient describes the effective symmetry group of $f$. Null fibers carry no such quotient structure. They are submanifolds of $G$, not subgroups, and they vary from input to input. But they determine the effective degrees of freedom of $f$ at each input, which for the applications in \Cref{sec:applications} is the relevant quantity.

The experiments reveal a distinction that the theory alone does not predict: the practical value of null fiber masking depends on whether the group action is isometric. When $\mathrm{SO}(3)$ acts on molecular point clouds, rotations preserve all internal geometry, and null fiber masking hides only the global orientation. When $\mathrm{PSL}(2, \CC)$ acts on spherical images via M\"obius transformations, the action is conformal but not isometric, and the masking genuinely distorts the visual content while preserving the model output. The model output is preserved exactly along the fiber directions, but whether those directions carry sensitive information is determined by the geometry of the group action, not by the model.

The spectral characterization of \Cref{thm:fourier-characterization} applies to compact groups via the Peter--Weyl theorem. For noncompact groups such as $\mathrm{SE}(3)$, the stabilizer and null fibers remain well-defined as closed subsets of $G$, but the Fourier analysis requires the Plancherel theorem and direct integral decompositions rather than discrete sums over irreducible representations. For $G = (\RR^n, +)$ this is standard Fourier analysis and recovers the framework of~\cite{li2025null}. Extending the spectral characterization to nonabelian noncompact groups is a natural next step. A complementary boundary is the case of finite $G$ or finite $V$. The Fourier characterization of Theorem~\ref{thm:fourier-characterization} survives as finite sums over irreducible representations, but the differential-geometric content does not. The dimension bound, the regular-point hypothesis, and the gradient algorithm of Section~\ref{sec:computation} all require a Lie group with an exponential map, and for finite $G$ the null fiber is a combinatorial object rather than a positive-dimensional submanifold. The masking and fingerprinting constructions then reduce to search over a finite set, with no gradient-based shortcut.

Several problems remain open. The topology of $N_G(f, x)$ as a function of $x$ is not understood; at critical points of the orbit map the fiber may be singular, and the stratification by fiber type is a problem in singularity theory. The evolution of $\Stab_G(f_\theta)$ and $N_G(f_\theta, x)$ during training is unexplored. And the approximate null fiber $N_G^\epsilon(f, x) = \{g \in G : |f(\pi(g^{-1}) \cdot x) - f(x)| < \epsilon\}$, which has positive measure for any $\epsilon > 0$ by continuity, has no known algebraic characterization.

\bibliographystyle{plain}
\bibliography{refs}

\end{document}